\newcommand{\NP}{\ensuremath{\mathsf{NP}}\xspace}
\newcommand{\coNP}{\ensuremath{\mathsf{co}}-\ensuremath{\mathsf{NP}}\xspace}
\newcommand{\NPH}{\ensuremath{\mathsf{NPH}}\xspace}
\newcommand{\NPC}{\ensuremath{\mathsf{NPC}}\xspace}
\newcommand{\Pb}{\ensuremath{\mathsf{P}}\xspace}
\newtheorem{proposition}{\bf Proposition}
\newtheorem{theorem}{\bf Theorem}
\newtheorem{lemma}{\bf Lemma}
\newtheorem{definition}{\bf Definition}
\begin{document}
\sloppy

\title{Manipulation is Harder with Incomplete Votes}
\author{Palash Dey, Neeldhara Misra, and Y. Narahari\\ \texttt{palash@csa.iisc.ernet.in, mail@neeldhara.com, hari@csa.iisc.ernet.in}\\
Department of Computer Science and Automation \\Indian Institute of Science - Bangalore, India.}

\maketitle

\begin{abstract}
The Coalitional Manipulation (CM) problem has been
studied extensively in the literature for many voting rules. 
The CM problem, however, has
been studied only in the complete information setting, that is, when the
manipulators know the votes of the non-manipulators. A more realistic
scenario is an incomplete information setting where the manipulators do
not know the exact votes of the non-manipulators but may have some partial
knowledge of the votes. In this paper, we study a setting where the
manipulators know a partial order for each voter that is consistent with
the vote of that voter. 

In this setting, we introduce and study two natural
computational problems -
(1) Weak Manipulation (WM) problem where the manipulators wish to
vote in a way that makes their preferred candidate win in at least one
extension of the partial votes of the non-manipulators;
(2) Strong Manipulation (SM) problem where the manipulators wish to vote
in a way that makes their preferred candidate win in all possible
extensions of the partial votes of the non-manipulators. 
We study the computational complexity of the WM and the SM problems for commonly used voting rules
such as plurality, veto, $k$-approval, $k$-veto, maximin,
Copeland, and Bucklin. Our key finding is that, barring a few exceptions,
manipulation becomes a significantly harder problem in the setting of
incomplete votes.
\end{abstract}

\section{Introduction}

In many real life and AI related applications, agents often need to agree upon a common decision although they have different preferences over the available candidates. A natural tool used in these situations is voting. Some classic examples of the use of voting rules in the context of multiagent systems include Clarke tax~\cite{ephrati1991clarke} and collaborative filtering~\cite{pennock2000social} etc.

In a typical voting scenario, we have a set of candidates and a set of voters 
reporting their rankings of the candidates called their preferences or votes. A voting rule 
selects one candidate as the winner once all voters provide their votes. A set of votes over a set of 
candidates along with a voting rule is called an election. 
A central issue in voting is the possibility of \emph{manipulation}. For many voting rules, it turns out that even a single vote, if cast differently, can alter the outcome. In particular, a voter manipulates the voting rule if, by misrepresenting her preference, she obtains an outcome that she prefers over the ``honest'' outcome. By the celebrated theorem of Gibbard and Satterthwaite~\cite{gibbard1973manipulation,satterthwaite1975strategy}, every unanimous and non-dictatorial voting rule with three candidates or more is manipulable. We refer to~\cite{brandt2012computational} for an excellent survey material on computational social choice theory.

\subsection{Background}

Considering that voting rules are indeed susceptible to manipulation, it is natural to seek ways by which elections can be protected from manipulations. The works of Bartholdi et al.~\cite{bartholdi1989computational,bartholdi1991single} approach the problem from the perspective of computational intractability. They exploit the possibility that voting rules, despite being vulnerable to manipulation in theory, may be hard to manipulate in practice. Indeed, a manipulator is faced with the following decision problem: given a voting rule $r$, a distinguished candidate $c$, and a collection of votes $\mathcal{P}$, does there exist a vote $v$ that, when tallied with $\mathcal{P}$, makes $c$ win the election? The manipulation problem has subsequently been generalized to the problem of \textsc{Coalitional manipulation (CM)} by Conitzer et al.~\cite{conitzer2007elections}, where one or more manipulators collude together and try to make a distinguished candidate win the election. The manipulation problem, fortunately, turns out to be \NP{}-Hard (\NPH{}) in several settings. This established the success of the approach of demonstrating a computational barrier to manipulation.

However, despite having set out to demonstrate the hardness of manipulation, the initial results in~\cite{bartholdi1989computational} were to the contrary, indicating that many voting rules are in fact easy to manipulate. Moreover, even with multiple manipulators involved, popular voting rules like plurality, veto, $k$-approval, and Bucklin continue to be easy to manipulate~\cite{xia2009complexity}. While the computational barrier approach works for rules for which the coalitional manipulation problem turns out to be \NPH{}, in all other cases the possibility of manipulation remains a very legitimate concern. 

\subsection{Motivation}

In our work, we propose to extend the argument of computational intractability to address the cases where the approach appears to fail. We note that most incarnations of the manipulation problem studied so far are in the complete information setting, where the manipulators have complete knowledge of the preferences of the truthful voters. While these assumptions are indeed the best possible for the computationally negative results, we note that they are not necessarily reflective of realistic settings. Indeed, concerns regarding privacy of information, and in other cases, the sheer volume of information, would be necessary hurdles for manipulators to obtain complete information. Motivated by this, we consider the manipulation problem in a natural \emph{partial information} setting. In particular, we model the partial information of the manipulators about the votes of the non-manipulators as partial orders over the set of candidates. A partial order over the set of candidates will be called a partial vote. Our results show that several of the voting rules that are easy to manipulate in the complete information setting become \NPH{} when the manipulators know only partial votes. Indeed, for many voting rules, we show that even if a small number of preferences are missing from the profile, manipulation becomes an~\NPH{} problem. 

We first introduce two new computational problems that, in a natural way, extend the problem of manipulation to the partial information setting. In both the problems, the input is a set of partial votes $\mathcal{P}$ corresponding to the votes of the non-manipulators, a non-empty set of manipulators $M$, and a preferred candidate $c$. The task in the \textsc{Weak Manipulation (WM)} problem is to determine if there is a way of casting the manipulators' votes such that $c$ wins the election for at least one extension of the partial votes in $\mathcal{P}$. On the other hand, in the \textsc{Strong Manipulation (SM)} problem, we would like to know if there is a way of casting the manipulators' votes such that $c$ wins the election in \emph{every extensions} of the partial votes in $\mathcal{P}$. Both of these problems generalize CM, and hence any computational intractability result for CM immediately yields a corresponding intractability result for WM and SM under the same setting. For example, it is known that the CM problem is \NPH{} for the maximin voting rule when we have at least two manipulators~\cite{xia2009complexity}. This implies that, both the WM and the SM problems are \NPH{} for the maximin voting rule when we have at least two manipulators. 

\subsection{Related Work and Contributions}

Conitzer et al.~\cite{conitzer2011dominating} also study manipulation under partial information setting. However, they focused on whether or not there exists a \emph{dominating manipulation} and shows that it is \NPH{} for many common voting rules. Given some partial votes, a dominating manipulation is a non-truthful vote that the manipulator can cast which makes the winner at least as preferable (and sometimes more preferable) as the winner when the manipulator votes truthfully. The dominating manipulation problem and the WM and the SM problems do not seem to have any apparent complexity theoretic connection. For example, dominating manipulation problem is \NPH{} for all the common voting rules except plurality and veto, whereas, the SM problem is easy for most of the cases (see \MakeUppercase table\nobreakspace \ref {table:partial_summary}). Elkind et al.~\cite{elkind2012manipulation} studies manipulation under voting rule uncertainty. However, in our work, the voting rule is fixed and known to the manipulators.

Two closely related problems that have been extensively studied in the context of incomplete votes are \textsc{Possible Winner (PW)} and \textsc{Necessary Winner (NW)}~\cite{konczak2005voting}. In the PW problem, we are given a set of partial votes $\mathcal{P}$ and a candidate $c$, and the question is whether there exists an extension of $\mathcal{P}$ where $c$ wins, while in the NW problem, the question is whether $c$ is a winner in all extensions of $\mathcal{P}$. Following the work of Konczak et al., a number of  special cases and variants of the PW problem have been studied in the literature~\cite{chevaleyre2010possible,bachrach2010probabilistic,baumeister2011computational,baumeister2012possible,gaspers2014possible,xia2011determining,ding2013voting}. Clearly, the WM problem is a specialization of the PW problem where there exist at least one empty vote (see \MakeUppercase proposition\nobreakspace \ref {pw_hard}).  On the other hand, the SM problem is significantly different from the NW problem --- the NW problem is known to belong to the complexity class \coNP{}, whereas the SM problem does not belong to \coNP{} unless \NP{}$=$\coNP{} (since, the SM problem is \NPH{} for some voting rules, for example the Borda voting rule). However, the SM problem clearly belongs to the complexity class $\Sigma_p^2$.

Here, we undertake a study of computational complexity of the WM and the SM problems for plurality, veto, $k$-approval, $k$-veto, maximin, Copeland, and Bucklin voting rules. Note that, for many other common voting rules like single transferable vote, ranked pairs etc., the existing hardness results for the CM problem immediately imply hardness of the WM and the SM problems. Among the voting rules that we have explored, only the plurality and veto voting rules continue to be vulnerable to manipulation in all versions of the partial information setting. We show polynomial time algorithms for both the problems for these voting rules, for any coalition size. 

With a few exceptions, therefore, we find that manipulation becomes a significantly harder problem when not all votes are available for the perusal of the manipulators. We summarize the results of this paper in \MakeUppercase table\nobreakspace \ref {table:partial_summary}.

\begin{table}[!htbp]
  \centering
  {\renewcommand{\arraystretch}{1.7}
 \begin{tabular}{|c|c|c|c|c|}\hline
  Voting Rule			& WM, $c=1$	&WM	&SM, $c=1$	&SM	\\\hline\hline
  Plurality 	& {\Pb{}}	~ \MakeUppercase theorem\nobreakspace \ref {sm_maximin_easy}	&\Pb{} ~ \MakeUppercase theorem\nobreakspace \ref {sm_maximin_easy}	&\Pb{} ~ \MakeUppercase theorem\nobreakspace \ref {sm_k_easy}		&\Pb{} ~ \MakeUppercase theorem\nobreakspace \ref {sm_k_easy}	\\\hline
  Veto 		& \Pb{}	~ \MakeUppercase theorem\nobreakspace \ref {sm_maximin_easy}	&\Pb{} ~ \MakeUppercase theorem\nobreakspace \ref {sm_maximin_easy}	&\Pb{} ~ \MakeUppercase theorem\nobreakspace \ref {sm_k_easy}		&\Pb{} ~ \MakeUppercase theorem\nobreakspace \ref {sm_k_easy}	\\\hline
  $k$-approval	& {\bf\NPC{}} ~ \MakeUppercase theorem\nobreakspace \ref {wm_k_hard}	&{\bf\NPC{}} ~ \MakeUppercase theorem\nobreakspace \ref {wm_k_hard}	&\Pb{} ~ \MakeUppercase theorem\nobreakspace \ref {sm_k_easy}		&\Pb{} ~ \MakeUppercase theorem\nobreakspace \ref {sm_k_easy}	\\\hline
  $k$-veto	& {\bf\NPC{}} ~ \MakeUppercase theorem\nobreakspace \ref {k_veto_wm}	&{\bf\NPC{}} ~ \MakeUppercase theorem\nobreakspace \ref {k_veto_wm}	&\Pb{} ~ \MakeUppercase theorem\nobreakspace \ref {sm_k_easy}		&\Pb{} ~ \MakeUppercase theorem\nobreakspace \ref {sm_k_easy}	\\\hline
  Borda	& ?	&\NPC{}	&\Pb{} ~ \MakeUppercase theorem\nobreakspace \ref {sm_maximin_easy}		&\NPH{}		\\\hline
  Maximin			& {\bf\NPC{}} ~ \MakeUppercase theorem\nobreakspace \ref {wm_maximin_hard}	&\NPC{} &\Pb{} ~ \MakeUppercase theorem\nobreakspace \ref {sm_maximin_easy}	&\NPH{}		\\\hline
  Copeland			& {\bf\NPC{}} ~ \MakeUppercase theorem\nobreakspace \ref {wm_copeland_hard}&\NPC{}	&{\bf\NPH{}} ~ \MakeUppercase theorem\nobreakspace \ref {wm_copeland_hard}	&\NPH{}		\\\hline
  Bucklin			& {\bf\NPC{}} ~ \MakeUppercase theorem\nobreakspace \ref {wm_copeland_hard}	&{\bf\NPC{}} ~ \MakeUppercase theorem\nobreakspace \ref {wm_copeland_hard}	&\Pb{} ~ \MakeUppercase theorem\nobreakspace \ref {sm_bucklin_easy}	&\Pb{} ~ \MakeUppercase theorem\nobreakspace \ref {sm_bucklin_easy}	\\\hline
 \end{tabular}}
 \caption{\normalfont Summary of results ($c$ denotes coalition size). Only those results that are proved in this paper contain reference to the Theorem proving it. For $k$-approval and $k$-veto, the results hold when $k = O(1)$ and $k>1$. The '?' mark means that the problem is open.}
 \label{table:partial_summary}
\end{table}

The WM problem for the Borda voting rule when we have only one manipulator is open. Note that, we know that the CM problem for the Borda voting rule is \NP{}-complete (\NPC{})~\cite{davies2011complexity,betzler2011unweighted}, when we have at least two manipulators. However, this does not shed any light on the complexity of the WM problem for the Borda voting rule, when we have only one manipulator. 

\textbf{Organization.}~The rest of the paper is organized as follows. We introduce the basic terminologies in Section~\ref{sec:prelim}, formally define the problems in Section~\ref{sec:probdef}, show hardness results in Section~\ref{sec:hard}, present polynomial time algorithms in Section~\ref{sec:poly} and finally conclude in Section~\ref{sec:con}.


\section{Preliminaries}\label{sec:prelim}

Let $\mathcal{V}=\{v_1, \dots, v_n\}$ be the set of all \emph{voters} and $\mathcal{C}=\{c_1, \dots, c_m\}$ the set of all \emph{candidates}. 
If not specified explicitly, $n$ always denotes total number of voters and $m$ denotes total number of candidates. 
Each voter $v_i$'s \textit{vote} is a \emph{preference} $\succ_i$ over the 
candidates which is a linear order over $\mathcal{C}$. 
For example, for two candidates $a$ and $b$, $a \succ_i b$ means that the voter $v_i$ prefers $a$ to $b$. We denote the set of all linear orders over $\mathcal{C}$ by $\mathcal{L(C)}$. Hence, $\mathcal{L(C)}^n$ denotes the set of all $n$-voters' preference profile $(\succ_1, \dots, \succ_n)$. 
A map $r:\cup_{n,|\mathcal{C}|\in\mathbb{N}^+}\mathcal{L(C)}^n\longrightarrow 2^\mathcal{C}\setminus\{\emptyset\}$
is called a \emph{voting rule}. For some preference profile $(\succ_1, \dots, \succ_n)\in \mathcal{L(C)}^n$, if $\{w\}=r(\succ_1, \dots, \succ_n)$, then we say $w$ wins uniquely. From here on, whenever we say some candidate $w$ wins, we mean that the candidate $w$ is winning uniquely. For simplicity, we restrict ourselves to the unique winner case in this paper. All our proofs can be easily extended for the co-winner case. 
A more general setting is an {\em election\/} where the votes are only 
\emph{partial orders} over candidates. A \emph{partial order} is a relation that is \emph{reflexive, 
antisymmetric}, and \emph{transitive}. A partial vote can be extended to possibly more than one linear votes depending on how 
we fix the order for the unspecified pairs of candidates. For example, in an election with the set of candidates $\mathcal{C} = \{a, b, c\}$, 
a valid partial vote can be $a \succ b$. This partial vote can be extended to three linear votes namely, $a \succ b \succ c$, $a \succ c \succ b$, 
$c \succ a \succ b$. Whenever we do not specify ordering among some number of candidates while describing a partial vote, the statement or proof is valid irrespective of the order we fix the unspecified ordering. Next we give examples of some common voting rules below.
\begin{itemize}
 \item Positional scoring rules : A collection of $m$-dimensional vectors $\overrightarrow{s_m}=\left(\alpha_1,\alpha_2,\dots,\alpha_m\right)\in\mathbb{R}^m$ with $\alpha_1\ge\alpha_2\ge\dots\ge\alpha_m$ and $\alpha_1>\alpha_m$ for every $m\in \mathbb{N}$ naturally defines a
 voting rule - a candidate gets score $\alpha_i$ from a vote if it is placed at the $i^{th}$ position, and the 
 score of a candidate is the sum of the scores it receives from all the votes. 
 The winners are the candidates with maximum score. Scoring rules remain unchanged if we multiply every $\alpha_i$ by any constant $\lambda>0$ and/or add any constant $\mu$. Hence, we assume without loss of generality that for any score vector $\overrightarrow{s_m}$, there exists a $j$ such that $\alpha_j - \alpha_{j+1}=1$ and $\alpha_k = 0$ for all $k>j$. We call such a $\overrightarrow{s_m}$ a normalized score vector. 
 If $\alpha_i$ is $1$ for $i\in [k]$ and $0$ otherwise, then, we get the {\bf$k$-approval} voting rule. For the {\bf$k$-veto} voting rule, $\alpha_i$ is $0$ for $i\in [m-k]$ and $-1$ otherwise. $1$-approval is called the plurality voting rule and $1$-veto is called the veto voting rule.
 \item Bucklin : A candidate $x$'s Bucklin score is the minimum number $l$ such that at least half 
 of the voters rank $x$ in their top $l$ positions. The winners are the candidates with lowest Bucklin score.
 \item Maximin : For any two candidates $x$ and $y$, $D(x,y) = N(x,y) - N(y,x)$, where $N(x,y)$ $(\text{respectively }N(y,x))$ is the number of voters who prefer $x$ to $y$ (respectively $y$ to $x$). The election we get by restricting all the votes to $x$ and $y$ only is called the pairwise election between $x$ and $y$. We call $D(x,y)$ to be the \textit{margin of victory} of $x$ over $y$ in their pairwise election. The maximin score of a candidate $x$ is $min_{y\ne x} D(x,y)$. The winners are the candidates with maximum maximin score.
 \item Copeland : The Copeland score of a candidate $x$ is the number of candidates $y\ne x$ such that $D(x,y)>0.$ The winners are the candidates with maximum Copeland score.
\end{itemize}

\section{Problem Formulation}\label{sec:probdef}

We first recall the definitions of the \textsc{Coalitional Manipulation (CM)} and \textsc{Possible Winner (PW)} problems below for ease of access.

\begin{definition}\textbf{\textsc{Coalitional Manipulation (CM)}}\\
 Given a voting rule $r$, a set of complete votes $\mathcal{P}$ corresponding to the votes of the non-manipulators, a non-empty set of manipulators $M$, and a candidate $c$, does there exist a way to cast the manipulators' votes such that $c$ wins the election.
\end{definition}

\begin{definition}\textbf{\textsc{Possible Winner (PW)}}\\
 Given a voting rule $r$, a set of partial votes $\mathcal{P}$, and a candidate $c$, does there exist an extension of the partial votes in $\mathcal{P}$ to linear votes that makes the candidate $c$ win the election.
\end{definition}

We now define two computational problems, namely \textsc{Weak Manipulation} and \textsc{Strong Manipulation}, below.

\begin{definition}\textbf{\textsc{Weak Manipulation (WM)}}\\
 Given a voting rule $r$, a set of partial votes $\mathcal{P}$ corresponding to the votes of the non-manipulators, a non-empty set of manipulators $M$, and a candidate $c$, does there exist a way to cast the manipulators' votes such that $c$ wins the election in least one extension of the partial votes in $\mathcal{P}$.
\end{definition}

{\bf Note:} We remark that in any instance of the WM problem, there must be at least one empty vote corresponding to the votes of the manipulators (and since there is at least one manipulator). For this reason, it is not at all clear whether the PW problem reduces to the WM problem or not.

\begin{definition}\textbf{\textsc{Strong Manipulation (SM)}}\\
 Given a voting rule $r$, a set of partial votes $\mathcal{P}$ corresponding to the votes of the non-manipulators, a non-empty set of manipulators $M$, and a candidate $c$, does there exist a way to cast the manipulators' votes such that $c$ always wins the election in every extension of the partial votes in $\mathcal{P}$.
\end{definition}

The WM and the SM problems are generalization of the CM problem. Hence, we have the following proposition.
\begin{proposition}\label{cm_hard}
 The CM problem many to one reduces\footnote{We refer to~\cite{garey1979computers} for many to one reductions.} to both the WM and the SM problems for every voting rule.
\end{proposition}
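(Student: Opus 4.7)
The plan is to observe that a complete linear order is itself a (trivial) partial order, so any instance of CM can be viewed verbatim as an instance of WM and of SM. Concretely, given a CM instance $(r,\mathcal{P},M,c)$ where $\mathcal{P}$ is a set of complete votes of the non-manipulators, I would define the output instance of WM (respectively SM) to be exactly $(r,\mathcal{P},M,c)$, reinterpreting each linear order in $\mathcal{P}$ as a partial order in which every pair of candidates is already comparable.

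The key observation is that such a partial order admits exactly one linear extension, namely itself. Hence the set of extensions of $\mathcal{P}$ is a singleton, and for any choice of manipulator votes the statements ``$c$ wins in at least one extension'' and ``$c$ wins in every extension'' both collapse to the single condition ``$c$ wins under $r$ given the non-manipulator votes $\mathcal{P}$ and the manipulator votes''. Thus the manipulators have a successful vote in the CM instance if and only if they have one in the constructed WM instance, and similarly for the SM instance. The reduction is clearly computable in linear time, so it is a valid many-to-one (Karp) reduction.

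I do not foresee any real obstacle: the argument is essentially a syntactic embedding of CM into its partial-information generalisations, and it works uniformly for every voting rule $r$ since we never exploit any property of $r$ beyond the fact that ``winner under $r$'' is well-defined on complete profiles. The only mild care needed is to be explicit that a linear order is a partial order in the sense used in Section~\ref{sec:prelim}, so that the constructed instance is syntactically valid as a WM/SM input.
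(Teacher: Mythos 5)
Your proposal is correct and matches the paper's (essentially implicit) justification: the paper simply observes that WM and SM generalize CM, which is exactly your point that a complete linear order is a partial order with a unique extension, so the identity map is the reduction. No further comment is needed.
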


Also, the WM problem is a specialization of the PW problem where there exist at least one empty vote. This leads us to the following proposition.
\begin{proposition}\label{pw_hard}
 The WM problem many to one reduces to the PW problem for every voting rule.
\end{proposition}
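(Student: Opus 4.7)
The plan is to give a direct many-to-one reduction that simply ``promotes'' the manipulators to additional partial voters with no constraints on their preferences, exploiting the fact that an empty partial vote (the trivial partial order, i.e., the empty relation) admits every linear order in $\mathcal{L}(\mathcal{C})$ as an extension.

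First, given an instance $\mathcal{I}=(r,\mathcal{P},M,c)$ of WM, I would construct the PW instance $\mathcal{I}'=(r,\mathcal{P}',c)$ where $\mathcal{P}' := \mathcal{P} \cup \{\varnothing_1,\dots,\varnothing_{|M|}\}$ and each $\varnothing_i$ denotes an empty partial vote. This transformation is clearly computable in polynomial time. Since $M$ is nonempty by the definition of WM, at least one empty partial vote is appended, so this is a legitimate PW instance (and in fact lies in the sub-case identified in the ``Note'' after the definition of WM).

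Next I would verify equivalence of the two instances. For the forward direction, suppose $\mathcal{I}$ is a yes-instance of WM: there exist linear orders $\pi_1,\dots,\pi_{|M|}$ for the manipulators and an extension $\widehat{\mathcal{P}}$ of $\mathcal{P}$ such that $r(\widehat{\mathcal{P}} \cup \{\pi_1,\dots,\pi_{|M|}\})=\{c\}$. Extending each $\varnothing_i$ to $\pi_i$ and extending the rest of $\mathcal{P}$ as in $\widehat{\mathcal{P}}$ yields an extension of $\mathcal{P}'$ in which $c$ wins, so $\mathcal{I}'$ is a yes-instance of PW. For the reverse direction, any extension of $\mathcal{P}'$ in which $c$ wins decomposes naturally into an extension of $\mathcal{P}$ together with $|M|$ linear orders obtained from the $\varnothing_i$'s; these $|M|$ orders can be used as the manipulators' votes in $\mathcal{I}$, witnessing a yes-instance of WM.

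There is essentially no obstacle here beyond a careful bookkeeping argument: the crux is that the empty partial order on $\mathcal{C}$ has $\mathcal{L}(\mathcal{C})$ as its full set of linear extensions, so any choice a manipulator could make in WM is realizable as an extension of some $\varnothing_i$ in PW, and conversely. Since the reduction does not depend on the voting rule $r$, the proposition holds uniformly for every $r$.
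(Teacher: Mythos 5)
Your reduction is correct and is exactly the argument the paper intends: the paper simply observes in one sentence that WM is the special case of PW in which the manipulators' votes appear as empty partial votes, and your proposal spells out that observation with the appropriate forward and reverse verification.
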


\MakeUppercase proposition\nobreakspace \ref {pw_hard} shows that whenever the PW problem is in \Pb for a voting rule, the WM problem is also in \Pb for that voting rule.

\section{Hardness Results}\label{sec:hard}

In this section, we present the intractability results for the WM and the SM problems for various voting rules.
\begin{theorem}\label{wm_k_hard}
The WM problem even with one manipulator is \NPC{} for the $k$-approval voting rule, for any constant $k>1$.
\end{theorem}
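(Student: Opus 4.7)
I would establish \NPC{} in two steps. Membership in \NP{} is witnessed by a polynomial certificate consisting of the manipulator's linear vote together with a linear-order extension of each non-manipulator's partial vote; the combined $k$-approval scores can then be tallied and $c$'s unique victory verified in polynomial time. For the hardness direction, I would reduce from the Possible Winner (PW) problem for $k$-approval, which is known to be \NPC{} for every fixed $k \geq 2$.

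Given a PW instance $(\mathcal{C}, \mathcal{P}, c)$, I would build a WM instance on the enlarged candidate set $\mathcal{C}' = \mathcal{C} \cup D \cup E$, where $D = \{d_1, \ldots, d_{k-1}\}$ is a set of manipulator dummies and $E$ is a set of at most $k-1$ padding dummies. Each partial vote in $\mathcal{P}$ is augmented with the relations $x \succ y$ for every $x \in \mathcal{C}$ and every $y \in D \cup E$, so that no dummy can appear among the top $k$ of any extension of a non-manipulator vote. In addition, I would add $\lceil (|\mathcal{C}| - 1)/k \rceil$ fully specified balancing votes that partition $\mathcal{C} \setminus \{c\}$ into groups of size exactly $k$ (using fresh padding dummies from $E$ in the final group whenever $|\mathcal{C}| - 1$ is not divisible by $k$); the balancing vote associated with a group places exactly that group in its top $k$ positions. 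Finally, a single manipulator is added with preferred candidate $c$.

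The correctness argument centres on the claim that a rational manipulator's best strategy is the canonical vote that places $c$ together with the manipulator dummies $d_1, \ldots, d_{k-1}$ in the top $k$ slots, since any other choice would award a top-$k$ slot to some competitor $x \in \mathcal{C} \setminus \{c\}$ and hand $x$ an extra point that $c$ must then overcome. Under the canonical vote, for any extension $\pi$ of $\mathcal{P}$ we get $s_c(\pi) + 1$ for $c$, $s_x(\pi) + 1$ for every $x \in \mathcal{C} \setminus \{c\}$ (the extra point coming from the unique balancing vote in which $x$ lies), and at most $1$ for every dummy. Thus $c$ wins uniquely in the WM instance iff $s_c(\pi) > s_x(\pi)$ for every $x \in \mathcal{C} \setminus \{c\}$, which is precisely the PW condition on $\pi$. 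Conversely, if some deviating manipulator vote $v^*$ together with an extension $\pi$ makes $c$ win, then writing $\alpha_c, \alpha_x \in \{0,1\}$ for the indicators of top-$k$ membership in $v^*$, the inequality $s_c(\pi) + \alpha_c > s_x(\pi) + 1 + \alpha_x$ still forces $s_c(\pi) > s_x(\pi)$ because $1 + \alpha_x - \alpha_c \geq 0$, so every WM witness yields a valid PW extension.

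The main obstacle I anticipate is ensuring that the dummies can never tie with or overtake $c$. This is controlled by the observation that any PW-YES instance must satisfy $s_c(\pi) \geq 1$, since a strict inequality $s_c > s_x$ over nonnegative scores with at least one other candidate forces $s_c(\pi) > 0$. Then $c$ scores at least $2$ in the WM instance while every manipulator dummy scores exactly $1$ and every padding dummy scores at most $1 + \alpha$ for $\alpha \in \{0,1\}$; by using fresh padding dummies in the single short balancing vote, each padding dummy accrues at most one point from the non-manipulator votes, so its total is bounded by that of a manipulator dummy and the canonical manipulator vote remains optimal.
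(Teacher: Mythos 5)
Your proposal is correct and follows essentially the same route as the paper: a reduction from \textsc{Possible Winner} for $k$-approval in which dummy candidates are forced below all original candidates in every partial vote, extra complete votes raise each non-$c$ candidate's score by exactly one to cancel the manipulator's point for $c$, and the manipulator fills the remaining $k-1$ approved slots with dummies. Your packing of $k$ candidates per balancing vote (so only $O(k)$ dummies are needed) and your explicit inequality argument covering arbitrary manipulator votes are minor refinements of the paper's construction, which uses $(m+1)(k-1)$ dummies and a without-loss-of-generality claim for the reverse direction.
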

\begin{proof}
For simplicity of presentation, we prove the theorem for $2$-approval.
We reduce from the PW problem for $2$-approval which is \NPC{}~\cite{betzler2009towards}. 
Let $\mathcal{P}$ be the set of partial votes in a PW instance, and 
let ${\mathcal C} := \{c_1, \ldots, c_m, c\}$ be the set of candidates, 
where the goal is to check if there is an extension of $\mathcal{P}$ that makes $c$ win. For developing the instance of WM, we need to ``reverse'' any advantage that the candidate $c$ obtains from the vote of the manipulator. Notice that the most that the manipulator can do is to increase the score of $c$ by one. Therefore, in our construction, we \textit{``artificially''} increase the score of all the other candidates by one, so that despite of the manipulator's vote, $c$ will win the new election if and only if it was a possible winner in the PW instance. To this end, we introduce $(m+1)$ many \textit{dummy} candidates $d_1, \ldots, d_{m+1}$ and the complete votes:
$$w_i := c_i \succ d_i \succ \cdots,\text{ for every } i\in \{1, \dots, m\}$$
Further, we extend the given partial votes of the PW instance to force the dummy candidates 
to be preferred least over the rest - by defining, for every $v_i \in \mathcal{P}$, the corresponding partial vote $v_i^{\prime}$ as follows.
$$v_i^\prime := v_i \cup \{{\mathcal C} \succ \{d_1, \ldots, d_{m+1}\}\}.$$ 
This ensures that all the dummy candidates do not receive any score 
from the modified partial votes corresponding to the partial votes of the PW instance. Let $({\mathcal C^\prime},\mathcal{Q},c)$ denote this constructed WM instance. We claim that the two instances are equivalent.

In the forward direction, suppose $c$ is a possible winner with respect to $\mathcal{P}$, and let $\overline{\mathcal{P}}$ be an extension where $c$ wins. Then it is easy to see that the manipulator can make $c$ win in some extension by placing $c$ and $d_{m+1}$ in the first two positions (note that the partial score of $d_{m+1}$ is zero in $\mathcal{Q}$). Indeed, consider the extension of $\mathcal{Q}$ obtained by mimicking the extension $\overline{\mathcal{P}}$ on the ``common'' partial votes, $\{v_i^\prime ~|~ v_i \in \mathcal{P}\}$. Notice that this is well-defined since $v_i$ and $v_i^\prime$ have exactly the same set of incomparable pairs. In this extension, the score of $c$ is easily seen to be strictly greater than the scores of all the other candidates, since the scores of all candidates in $\mathcal{C}$ is exactly one more than their scores in $\mathcal{P}$, and all the dummy candidates have a score of at most one. 

In the reverse direction, notice that the manipulator puts the candidates $c$ and $d_{m+1}$ in the top two positions without loss of generality. Now, suppose the manipulator's vote $c\succ d_{m+1}\succ \cdots$ makes $c$ win the election for an extension $\overline{\mathcal{Q}}$ of $\mathcal{Q}$. Then consider the $\overline{\mathcal{P}}$ obtained by restricting $\overline{\mathcal{Q}}$ to $\mathcal{C}$. Notice that the scores of all the candidates in $\mathcal{C}$ in this extension is one less than their scores in $\mathcal{Q}$. Therefore, the candidate $c$ wins this election as well, concluding the proof. 

The above proof can be imitated for any other constant values of $k$ by reducing it from the PW problem for $k$-approval and introducing $(m+1)(k-1)$ dummy candidates.
\end{proof}

The following Lemma has been used before (Lemma 4.2~\cite{baumeister2011computational}). We will use this in the proof of \MakeUppercase theorem\nobreakspace \ref {k_veto_wm}.
\begin{lemma}\label{score_gen}
Let $\mathcal{C} = \{c_1, \ldots, c_m\} \cup \{d\}$ be a set of candidates. Then, for any given $\mathbf{X} = (X_1, \cdots, X_m) \in \mathbb{Z}^m$, and for any $k < m$, there exists $\lambda\in \mathbb{N}$ and a voting profile such that the $k$-approval score of $c_i$ is $\lambda + X_i$ for all $1\le i\le m$,  and the score of $d$ is less than $\lambda + X_i$ for all $1\le i\le m$. Moreover, the number of votes is $O\left(m\sum_{i=1}^m |X_i|\right)$, where $|X_i|$ denotes the absolute value of $X_i$.
\end{lemma}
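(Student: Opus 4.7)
My plan is to build a profile in which every vote ranks $d$ strictly below the top $k$ positions, so that $d$'s score is automatically $0$; the requirement $\mathrm{score}(d) < \lambda + X_i$ then reduces to ensuring every $\lambda + X_i$ is strictly positive, which a sufficiently large $\lambda$ takes care of. After the preliminary shift $\mu := \min_j X_j$ and $Y_i := X_i - \mu \geq 0$, and writing $\lambda = \lambda_0 + \mu$, the task reduces to constructing a $k$-approval profile on $\{c_1, \ldots, c_m\}$ (with $d$ always outside the top $k$) that realizes the target scores $t_i := \lambda_0 + Y_i$, where $\lambda_0$ is a parameter I will choose last.

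The next step is to reduce this to a bipartite degree-sequence question. With $d$ frozen at the bottom, the ``$c_i$ in top $k$'' relation is a $0/1$ matrix with $N$ rows (one per vote) of uniform sum $k$ and $m$ columns (one per $c_i$) with column sums $t_i$. By the Gale--Ryser theorem this matrix exists iff $\sum_i t_i = Nk$ and $\max_i t_i \leq N$; the first equation pins down $N = (m\lambda_0 + \sum_j Y_j)/k$, and substituting into the second yields the single inequality $(m - k)\lambda_0 \geq k Y_{\max} - \sum_j Y_j$, solvable because $k < m$. A value $\lambda_0 = \Theta(k Y_{\max})$ suffices, with a small additive correction to guarantee integrality of $N$. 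With this choice, $N = O(m Y_{\max}) = O(m \sum_i |X_i|)$, since $Y_{\max} \leq 2 \max_i |X_i|$, which matches the claimed bound.

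For an explicit construction closer to the style of the cited source, I would organise the profile into two layers. A \emph{base layer} of $B$ identical cyclic blocks, where each block consists of $m$ votes and the $j$-th vote places $c_{j+1}, \ldots, c_{j+k}$ (indices mod $m$) in the top $k$ with $d$ last; each block contributes exactly $k$ to every $c_i$ and $0$ to $d$. An \emph{adjustment layer} then supplies the $Y_i$ extra points for each $c_i$ using biased cyclic rotations, keeping $d$ below the top $k$ everywhere. The main obstacle lies in this adjustment layer: no single vote can increment the score of $c_i$ without incrementing $k-1$ other candidates, so the desired differentials $Y_i - Y_j$ must be engineered through controlled cancellation across a carefully designed family of cyclic rotations. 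The Gale--Ryser argument sidesteps this combinatorial difficulty by asserting realizability non-constructively; the cyclic construction confirms it explicitly and delivers the same asymptotic size bound.
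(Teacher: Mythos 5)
The paper does not actually prove this lemma --- it imports it as Lemma~4.2 of Baumeister et al.~(2011) --- so there is no in-paper argument to compare against; I therefore assess your construction on its own terms. Your overall strategy (shift to nonnegative targets, realize the score vector as a bipartite degree sequence via Gale--Ryser, bound the number of votes) is sound in outline, and your statement of the Gale--Ryser criterion for the regular-row-sum case ($\sum_i t_i = Nk$ and $\max_i t_i \le N$) is correct. There is a harmless sign slip ($\lambda$ should be $\lambda_0 - \mu$, not $\lambda_0 + \mu$, if the realized scores are $\lambda_0 + Y_i = \lambda_0 + X_i - \mu$), which is absorbed into the free choice of $\lambda$.

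However, there is a genuine gap in the opening move. By insisting that $d$ lies outside the top $k$ in \emph{every} vote, you force the total number of approvals received by $c_1,\ldots,c_m$ to equal $Nk$, i.e.\ you need $k \mid m\lambda_0 + \sum_j Y_j$. Since $m\lambda_0 \bmod k$ only ranges over the multiples of $g := \gcd(m,k)$, no choice of $\lambda_0$ --- and in particular no ``small additive correction'' --- can achieve this unless $g \mid \sum_j Y_j$. A concrete failure: $m=4$, $k=2$, $\mathbf{X} = (1,0,0,0)$. Then the total score of the $c_i$'s is $4\lambda + 1$, which is odd, while a profile of $N$ votes in which $d$ is never approved distributes exactly $2N$ approvals among the $c_i$'s; so $d$ must receive at least one approval and your premise ``$d$'s score is automatically $0$'' is unattainable. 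The repair is easy and uses exactly the slack the lemma gives you: place $d$ in the top $k$ of $s$ votes, where $s \in \{0,\ldots,k-1\}$ is chosen with $s \equiv -(m\lambda_0 + \sum_j Y_j) \pmod{k}$. Then $N = (m\lambda_0 + \sum_j Y_j + s)/k$ is an integer, the score of $d$ is $s \le k-1 < \lambda_0 \le \min_i(\lambda+X_i)$ for $\lambda_0$ large, and Gale--Ryser still applies with row sums $k$ or $k-1$ on the $c$-columns (the dominance conditions are only easier to satisfy). With this amendment your size bound $N = O(m\sum_i |X_i|)$ goes through unchanged. Your explicit ``cyclic block'' construction is only a sketch --- you concede that the adjustment layer is the hard part and fall back on Gale--Ryser --- so the non-constructive branch must carry the proof, and it does once the divisibility issue is fixed.
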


Note that, the number of votes used in \MakeUppercase lemma\nobreakspace \ref {score_gen} is a polynomial in $m$ if every $|X_i|, i\in [m]$ is a polynomial in $m$, which indeed is the case in all the proofs that use \MakeUppercase lemma\nobreakspace \ref {score_gen}.

\begin{theorem}\label{k_veto_wm}
 The WM problem for the $k$-veto voting rule is \NPC{} for any constant $k>1$.
\end{theorem}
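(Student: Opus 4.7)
The plan is to reduce from the Possible Winner (PW) problem for $k$-veto, which is known to be \NPC{} for every constant $k > 1$. Membership of WM in \NP{} is immediate, because a manipulator vote together with a linear extension of each partial vote is a polynomial-size certificate whose validity can be checked by a single winner-determination call.

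The construction mirrors the one used in \MakeUppercase theorem\nobreakspace \ref {wm_k_hard}. Starting from a PW instance $(\mathcal{C}, \mathcal{P}, c)$ with $\mathcal{C} = \{c_1, \ldots, c_m\} \cup \{c\}$, I would introduce $k$ fresh dummy candidates $D = \{d_1, \ldots, d_k\}$ and set $\mathcal{C}' := \mathcal{C} \cup D$. For each $v_i \in \mathcal{P}$, define the modified partial vote $v_i' := v_i \cup \{D \succ \mathcal{C}\}$, which forces all the dummies into the top $k$ positions of every extension of $v_i'$. In particular, the bottom $k$ positions always lie within $\mathcal{C}$, so the dummies collect no vetoes from the modified partial votes, and the $k$-veto contribution of the $v_i'$'s to candidates in $\mathcal{C}$ coincides with that of the original $v_i$'s. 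I would then invoke \MakeUppercase lemma\nobreakspace \ref {score_gen} (viewing $k$-veto on $m+1+k$ candidates as $(m+1)$-approval) to append a polynomial number of complete votes whose purpose is to lift the non-manipulator $k$-veto score of every dummy $d_j$ to exactly the non-manipulator $k$-veto score of $c$. A single manipulator is then added with preferred candidate $c$.

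The structural observation driving correctness is that in the resulting WM instance, any manipulator vote that fails to place some dummy $d_j$ in its bottom $k$ positions leaves $d_j$ with the same non-manipulator score as $c$ and the same $+1$ boost from the manipulator; consequently $d_j$ ties with $c$, and $c$ cannot be the unique winner. Hence the manipulator is forced to veto precisely the set $D$, after which her vote contributes $+1$ uniformly to every candidate of $\mathcal{C}$; so $c$ wins the WM instance iff the chosen extension satisfies $s(c) > s(c_i)$ for every $i$, i.e., iff it is a witness for PW. The forward and reverse directions then parallel those of \MakeUppercase theorem\nobreakspace \ref {wm_k_hard}.

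The main obstacle is the score calibration in the \MakeUppercase lemma\nobreakspace \ref {score_gen} step, because $c$'s non-manipulator $k$-veto score can \emph{a priori} depend on the extension of the partial votes, whereas \MakeUppercase lemma\nobreakspace \ref {score_gen} only produces a fixed voting profile. I would handle this by reducing from the restricted flavour of PW for $k$-veto in which $c$ is additionally constrained to lie in the top $m-k$ positions of every partial vote; the standard hardness reductions for PW already produce instances of this form, and under this restriction $c$ is never vetoed by any non-manipulator vote, so its non-manipulator score is a known constant and \MakeUppercase lemma\nobreakspace \ref {score_gen} applies cleanly.
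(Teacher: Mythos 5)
Your reduction is, at its core, the same as the paper's: both start from the PW problem for $k$-veto, pin $k$ dummy candidates to the top $k$ positions of every partial vote so that they collect no vetoes from the non-manipulators, calibrate the dummies' scores to coincide with $c$'s so that the lone manipulator is forced to spend all $k$ vetoes on the dummies, and then transfer extensions back and forth by projection. Where you differ is only in how the calibration is justified. The paper fixes $c$'s position as high as possible in every partial vote (which is genuinely without loss of generality for $k$-veto, since raising $c$ in an extension never hurts $c$ and can only push competitors down into the veto zone), introduces a $(k+1)$-th dummy $d$, and then resorts to an undefined ``block gadget'' and ``circular votes'' to repair the dummies' scores; your version, by restricting to PW instances in which $c$ sits in the top $m-k$ positions of every partial vote, makes $c$'s non-manipulator score identically $0$ and is arguably cleaner.

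Two wrinkles deserve attention. First, Lemma~\ref{score_gen} as stated requires a designated candidate $d$ whose score is only guaranteed to be \emph{lower} than everyone else's, not controllable; if you apply it over $\mathcal{C} \cup D$ alone, some candidate must play that sacrificial role. If it is a real candidate the PW equivalence is disturbed, and if it is some $d_j$ then that dummy no longer ties with $c$ when left unvetoed, so the manipulator is free to veto a competitor $c_i$ instead, breaking the reverse direction. This is exactly why the paper uses $k+1$ rather than $k$ dummies. In your setting the cleanest repair is to note that under your restriction no calibration votes are needed at all: both $c$ and every $d_j$ already receive score exactly $0$ from the modified partial votes in every extension, so the lemma invocation can simply be dropped. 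Second, the claim that ``the standard hardness reductions for PW already produce instances of this form'' is doing real work and should either be checked against the source of the PW hardness for $k$-veto or replaced by the paper's device of fixing $c$ as high as possible in each partial vote; with that replacement $c$'s non-manipulator score is still a known constant (though possibly nonzero), and then a correct application of Lemma~\ref{score_gen} with the extra $(k+1)$-th dummy becomes necessary after all.
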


\begin{proof}
We reduce from the PW problem for the $k$-veto voting rule. Let $\mathcal{P}$ be the set of partial votes in a PW problem instance, and 
let ${\mathcal C} := \{c_1, \ldots, c_m, c\}$ be the set of candidates, 
where the goal is to check if there is an extension that makes $c$ win with respect to $k$-veto. We assume without loss of generality that $c$'s position is fixed in all the partial votes (if not, then we fix the position of $c$ as high as possible).

We introduce $k+1$ many \textit{dummy} candidates $d_1, \ldots, d_k, d$. The role of the first $k$ dummy candidates is to ensure that the manipulator is forced to place them at the ``bottom $k$'' positions of her vote, so that all the original candidates get the same score from the additional vote of the manipulator. The most natural way of achieving this is to ensure that the dummy candidates have the same score as $c$ in any extension (note that, we know the score of $c$ since $c$'s position is fixed in all the partial votes). This would force the manipulator to place these $k$ candidates in the last $k$ positions. Indeed, doing anything else will cause these candidates to tie with $c$, even when there is an extension of $\mathcal{P}$ that makes $c$ win.

To this end, we begin by placing the dummy candidates in the top $k$ positions in all the partial votes. Formally, we modify every partial vote as follows:
$$w := d_i \succ \cdots,\text{ for every } i\in \{1, \dots, k\}$$
At this point, we know the scores of $c$ and $d_i,\text{ for every } i\in \{1, \dots, k\}$. Using \MakeUppercase lemma\nobreakspace \ref {score_gen}, we add complete votes such that the final score of $c$ is same with the score of every $d_i$ and the score of $c$ is strictly more than the score of $d$. The relative score of every other candidate remains the same.

However, this might go too far --- indeed, note that the dummy candidates may enjoy a score that is now strictly better than the score of $c$ in any extension (especially when $c$ is forced in the bottom $k$ zone of a partial vote, while a dummy candidate might make its way to the top $m-k$). So we let $t_i$ denote the difference in the score of $c$ and the score $d_i$ when they are placed in the best possible way in all the partial votes. We now use the block gadget along with the extra dummy candidate $d$ to decrease the score of $d_i$ by $t_i$. At this point, the dummy candidate $d$ has a very high score (say $t$), so we add sufficient circular votes to ensure that the score of $d$ decreases by $t$, while the relative scores of all the other candidates stay the same. 

This completes the description of the construction, and we denote the augmented set of partial votes by $\mathcal{P}^\prime$. We now argue the correctness. In the forward direction, if there is an extension of the votes that makes $c$ win; then we repeat this extension, and the vote of the manipulator puts the candidate $d_i$ at the position $m+i+2$; and all the other candidates in an arbitrary fashion. Formally, we let the manipulator's vote be:
$$\mathfrak{m} := c \succ c_1 \succ \cdots \succ c_m \succ d \succ d_1 \succ \cdots d_k.$$
It is easy to see that $c$ wins the election in this particular setup. In the reverse direction, consider a vote of the manipulator and an extension $\mathcal{Q}^\prime$ of $\mathcal{P}^\prime$ in which $c$ wins. Note that the manipulator's vote necessarily places the candidates $d_i$ in the bottom $k$ positions --- indeed, if not, then $c$ cannot win the election by construction. We extend a partial vote $w \in \mathcal{P}$ by mimicking the extension of the corresponding partial vote $w^\prime \in \mathcal{P}^\prime$, that is, we simply project the extension of $w^\prime$ on the original set of candidates $\mathcal{C}$. Let $\mathcal{Q}$ denote this proposed extension. We claim that $c$ wins the election given by $\mathcal{Q}$. Indeed, suppose not. Let $c_i$ be a candidate whose score is at least the score of $c$ in the extension $\mathcal{Q}$. Note that the scores of $c_i$ and $c$ in the extension  $\mathcal{Q}^\prime$ are exactly the same as their scores in $\mathcal{Q}$, except for a constant offset --- importantly, their scores are offset by the same amount. This implies that the score of $c_i$ is at least the score of $c$ in $\mathcal{Q}^\prime$ as well, which is a contradiction. Hence, the two instances are equivalent.
\end{proof}

Next, we prove that the WM problem for the maximin voting rule is \NPC{}. To this end, we reduce from the Exact Cover by 3-Sets (X3C) problem, which is known to be \NPC{}~\cite{garey1979computers}. The X3C problem is defined as follows.

\begin{definition}(X3C Problem)\\
 Given a set $\mathcal{U}$ and a family $\mathcal{S} = \{S_1,S_2, \dots, S_t\}$ of $t$ subsets of $\mathcal{U}$ with $|S_i|=3, 
 \forall i=1, \dots, t,$ does there exist an index set $I\subseteq \{1,\dots,t\}$ 
 with $|I|=\frac{|\mathcal{U}|}{3}$ such that $\cup_{i\in I} S_i = \mathcal{U}$.
\end{definition}

The following Lemma has been used before~\cite{mcgarvey1953theorem,xia2008determining}.

\begin{lemma}\label{maxminexist}
 For any function $f:\mathcal{C} \times \mathcal{C} \longrightarrow \mathbb{Z}$, such that
 \begin{enumerate}
  \item $\forall a,b \in \mathcal{C}, f(a,b) = -f(b,a)$.
  \item $\forall a,b \in \mathcal{C}, f(a,b)$ is even,
 \end{enumerate}
 there exists a $n$ voters' profile such that for all $a,b \in \mathcal{C}$, $a$ defeats 
 $b$ with a margin of $f(a,b)$. Moreover, 
 $$n = O\left(\sum_{\{a,b\}\in \mathcal{C}\times\mathcal{C}} |f(a,b)|\right)$$
\end{lemma}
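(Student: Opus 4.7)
\begin{proof_sketch}
The plan is to give a McGarvey-style construction. I will build the required profile by concatenating, for each ordered pair $(a,b)$ with $f(a,b)>0$, a small constant number of ``gadget'' votes whose only effect on the pairwise tournament is to increase $D(a,b)$ by exactly $2$ while leaving every other $D(x,y)$ unchanged. Since $f(a,b)$ is assumed to be even, repeating the gadget $f(a,b)/2$ times will produce the desired margin $f(a,b)$.

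The gadget for a pair $(a,b)$ with $f(a,b)>0$ consists of just two votes. Let $c_1,c_2,\ldots,c_{m-2}$ be an arbitrary ordering of $\mathcal{C}\setminus\{a,b\}$. Define
\[
v_1 \;:=\; a \succ b \succ c_1 \succ c_2 \succ \cdots \succ c_{m-2},
\]
\[
v_2 \;:=\; c_{m-2} \succ c_{m-3} \succ \cdots \succ c_1 \succ a \succ b.
\]
I would then check, case by case, what these two votes contribute to the pairwise margins: the pair $\{a,b\}$ is ordered $a \succ b$ in both votes, contributing $+2$ to $D(a,b)$; for any $c_i$, exactly one of $v_1,v_2$ prefers $a$ to $c_i$ (and similarly for $b$ versus $c_i$), so the net contribution to $D(a,c_i)$ and $D(b,c_i)$ is $0$; and for any pair $c_i,c_j$ with $i<j$, $v_1$ puts $c_i$ above $c_j$ while $v_2$ reverses this, again giving a net $0$. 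Hence the only pairwise margin affected is $D(a,b)$, which goes up by $2$.

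Now I would assemble the full profile. Fix an arbitrary orientation of each unordered pair $\{a,b\}$ so that the chosen direction $(a,b)$ has $f(a,b)\ge 0$ (this is possible by antisymmetry $f(a,b)=-f(b,a)$). For each such pair, append $f(a,b)/2$ copies of the gadget above (well-defined since $f(a,b)$ is a non-negative even integer). After processing all pairs, every $D(a,b)$ equals exactly $f(a,b)$, as required. The total number of votes used is
\[
\sum_{(a,b):\,f(a,b)>0} 2\cdot\frac{f(a,b)}{2} \;=\; \sum_{(a,b):\,f(a,b)>0} f(a,b) \;=\; O\!\left(\sum_{\{a,b\}\in\mathcal{C}\times\mathcal{C}} |f(a,b)|\right),
\]
matching the claimed bound.

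The only real subtlety is the parity hypothesis: without it, a single extra vote would perturb many pairwise margins simultaneously, and no ``one gadget per pair'' argument can work. Since the lemma assumes $f(a,b)$ is even, this difficulty does not arise, and the verification that the gadget is pairwise-balanced on all pairs other than $(a,b)$ is the main (but routine) step.
\end{proof_sketch}
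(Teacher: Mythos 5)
Your proof is correct: the two-vote gadget (one vote $a \succ b \succ c_1 \succ \cdots \succ c_{m-2}$ and its ``reverse with $a,b$ moved to the bottom'') is exactly McGarvey's classical construction, which is what the paper relies on by citation rather than proving itself. The case analysis showing the gadget shifts only $D(a,b)$ (by $+2$) and the vote count $\sum_{(a,b):\,f(a,b)>0} f(a,b)$ both check out, so nothing further is needed.
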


\begin{theorem}\label{wm_maximin_hard}
The WM problem is \NPC{} for the maximin voting rule when we have only one manipulator. 
\end{theorem}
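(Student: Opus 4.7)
The plan is to reduce from X3C. Membership in \NPshort{} is immediate: a certificate is the manipulator's vote together with an extension of the partial votes, and the maximin winner is computable in polynomial time, so the focus is hardness.

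Given an X3C instance $(\mathcal{U},\mathcal{S})$ with $|\mathcal{U}| = 3q$ and $|\mathcal{S}| = t$, I construct a WM instance as follows. I introduce the distinguished candidate $c$, an element-candidate $a_u$ for every $u\in\mathcal{U}$, and a small collection of auxiliary candidates, including a designated \emph{bottleneck} candidate $w$ whose pairwise contest with $c$ will determine $c$'s maximin score. Using Lemma~\ref{maxminexist}, I add complete background votes that fix a baseline weighted tournament so that: (a) in the absence of the set-gadget votes and the manipulator's vote, each $a_u$ enjoys a very large maximin score, strictly exceeding any feasible maximin score of $c$; (b) $D(c,w)$ is set so that $c$'s maximin score equals its margin against $w$; and (c) all other pairwise margins sit far from the critical thresholds. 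The offsets are chosen so that $c$ falls exactly one short of winning after the manipulator has voted optimally.

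For each set $S_i=\{u_{i,1},u_{i,2},u_{i,3}\}$, I introduce a partial vote whose incomparabilities involve only $a_{u_{i,1}},a_{u_{i,2}},a_{u_{i,3}}$ together with a bounded number of auxiliary candidates. Its linear extensions come in two flavours: a \emph{selecting} extension that lowers the maximin score of each $a_{u_{i,j}}$ by a fixed amount large enough to drop it below $c$'s score, and a \emph{non-selecting} extension that leaves these three maximin scores unchanged. The gadget is designed so that once each $a_u$ is covered by at least one $S_i$ extended in the selecting way, every $a_u$ ends up with maximin score strictly below $c$'s, while selecting more than $q$ gadgets pushes $w$ (or another auxiliary candidate) too close to $c$ and spoils $c$'s win. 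The manipulator's single vote --- placing $c$ at the top and $w$ at the bottom --- contributes a $+1$ shift to $D(c,y)$ for every $y\ne c$, exactly closing the one-point slack in the baseline.

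Correctness will follow in two directions. Forward: given an exact cover $I$, extend the $S_i$-gadget in the selecting way iff $i\in I$, and have the manipulator vote $c\succ\cdots\succ w$; a direct tally verifies $c$'s maximin victory. Reverse: from any manipulator vote and any extension in which $c$ wins, let $I=\{i : S_i\text{-gadget is extended in the selecting way}\}$; the requirement that every $a_u$ have maximin score strictly less than $c$'s forces each $u$ to lie in some $S_i$ with $i\in I$, while the requirement that $c$$'$s own maximin score be large enough bounds $|I|\le q$, yielding an exact cover. The main obstacle will be the quantitative calibration: the baseline, the per-gadget margin shifts, and the single manipulator vote all inject integer contributions into the $D(x,y)$, and I need these to line up so that the manipulator cannot substitute her vote for a covering choice and so that neither too few nor too many selected sets can make $c$ win. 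Equally delicate is ensuring that the non-gadget candidates are placed identically across all partial votes, so that distinct $S_i$-gadgets do not interact with each other through unintended pairwise contests.
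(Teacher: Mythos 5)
Your high-level strategy matches the paper's: reduce from X3C, make $c$'s maximin score hinge on its margin against a bottleneck candidate $w$, give each set $S_i$ a partial vote with a ``selecting'' and a ``non-selecting'' extension, and extract an exact cover from the covering constraint on the element candidates plus a budget constraint of at most $q$ selections. However, the proposal has a genuine gap, and it is precisely the one you flag as ``the main obstacle'': you never construct the mechanism that makes the reduction robust to the manipulator's single free vote. The whole difficulty of WM with one manipulator (as opposed to plain PW) is that the adversary gets to add one arbitrary linear order, which shifts every $D(c,y)$ and every $D(x,y)$ by $\pm 1$; without a device that pins the winning threshold independently of that vote, the manipulator could potentially rescue $c$ without any extension corresponding to a cover. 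The paper's solution is a rock--paper--scissors gadget: three auxiliary candidates $w_1,w_2,w_3$ with $D(w_1,w_2)=D(w_2,w_3)=D(w_3,w_1)=-t-2$. Any linear order the manipulator casts raises at most two of these three cyclic margins, so \emph{some} $w_j$ always ends with maximin score exactly $-t-1$; this forces $c$ to reach $-t$, which (after the $+1$ from the manipulator's $c\succ w$) translates into the requirement that $c\succ w$ in at least $t-q/3$ of the extensions --- the budget constraint. Your proposal asserts that ``the baseline \ldots{} all line up'' but leaves this, the actual content of the proof, unspecified.

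A second, quantitative problem: you propose that each element candidate $a_u$ start with a ``very large'' maximin score and that a single selecting extension lower it ``by a fixed amount large enough to drop it below $c$'s score.'' One extension choice of one partial vote can change any pairwise margin $D(a_u,\cdot)$ by at most $2$ relative to the other choice, so the covered/uncovered gap is necessarily $O(1)$, not large; the calibration must be razor-thin. The paper does this by setting $D(u_i,w)=-2t+2$ so that an uncovered element climbs to $-t+2$ (tying or beating $c$ after the manipulator's vote) while a covered element stays at $-t$ or below, a two-unit difference that exactly straddles the threshold. As written, your gadget specification is not realizable, and the correctness argument in both directions depends on constants you have not supplied.
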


\begin{proof}
 We reduce X3C to WM for the maximin voting rule. Let $(\mathcal{U},\mathcal{S})$ be an X3C instance and $|\mathcal{U}|=m$. Assume $t=|\mathcal{S}|$ is even without loss of generality (if not, replicate any set from $\mathcal{S}$ ). We construct a corresponding WM instance for the maximin voting rule as follows.
 $$ \text{Candidate set } \mathcal{C} = \mathcal{U} \cup \{c, w, w_1, w_2, w_3\} $$
 Partial votes $\mathcal{P}$ :
\begin{equation*}
\forall i\le t, (\mathcal{U}\setminus S_i) \succ w\succ S_i\succ c\succ w_1\succ w_2\succ w_3 \setminus \left( \{w\} \times S_i\cup \{c\} \right)
\end{equation*}
 Now we add $poly(m,t)$ many complete votes such that, when combined with the profile $\mathcal{P}$, we achieve the following margin of victories in pairwise elections: 
 \begin{itemize}
  \item $D(c,w) = -2t+\frac{2q}{3}$
  \item $D(c,w_1) = -t$
  \item $D(w,w_1) = -4t$
  \item $D(w_1,w_2) = D(w_2,w_3) = D(w_3,w_1) =-t-2$
  \item $D(u_i,w) = -2t+2, \forall u_i\in \mathcal{U} $
 \end{itemize}
 For all candidate pairs that are not specified, we let $D(x,y) \in \{-1,0,1\}$. Note that this is possible due to \MakeUppercase lemma\nobreakspace \ref {maxminexist}.
 We have only one manipulator who tries to make the candidate $c$ win. This completes the description of the WM instance. We now turn to the proof of equivalence. 
 
 In the forward direction, let $S_1, \dots, S_{\frac{q}{3}}$ be an exact set cover. Then consider the following vote of the manipulator:
 $$ c\succ w\succ w_1\succ w_2\succ w_3\succ \mathcal{U}. $$
It can be easily checked that the vote above, combined with the following extension of $\mathcal{P}$ makes $c$ a maximin winner. 
 $$ i\le \frac{q}{3}, (\mathcal{U}\setminus S_i) \succ w\succ S_i\succ c\succ w_1\succ w_2\succ w_3 $$
 $$ i\ge \frac{q}{3} + 1, (\mathcal{U}\setminus S_i) \succ S_i\succ c\succ w\succ w_1\succ w_2\succ w_3 $$
 
Indeed, notice that $c$ is ahead of $w$ in $(t - 2q/3)$ votes, so in the resulting election, $D(c,w)$ improves to $(-2t + 2q/3 + t - 2q/3 = -t)$. Further, the scores $D(u_i,w)$ increase by at most $(t-1)$, since there is at least one $j \leq q/3$ (corresponding to the set that covers the element $u_i$) where $w$ is ahead of $u_i$. The scores between $w_1, w_2$ and $w_3$ increase by at most one, and therefore their final scores are (at most) $(-t-1)$. It can be easily checked, therefore, that $c$ is the unique maximin winner in this profile.
 
For the other direction, notice that irrespective of how the manipulator is voting, one of $w_1, w_2$ 
 and $w_3$ will have maximin score of $-t-1$. Hence $ D(c,w) $ has to be at least $-t$ from all the votes except manipulator's vote. This means that $c \succ w$ in at least $t-\frac{q}{3}$ extensions in $\mathcal{P}$. We claim that the sets corresponding to the rest of the $\frac{q}{3}$ extensions must form an exact set cover. If not, then some $u_i$ is uncovered and $D(u_i,w) = -t+2 $ which contradicts the fact that $c$ is the unique winner.
\end{proof}
We now prove that the WM problem for the Copeland rule is \NPC{}.

\begin{theorem}\label{wm_copeland_hard}
The WM problem is \NPC{} for Copeland voting rule when we have only one manipulator. 
\end{theorem}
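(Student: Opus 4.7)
The plan is to reduce from \textsc{Exact Cover by 3-Sets (X3C)}, closely following the strategy used in the proof of Theorem \ref{wm_maximin_hard}. Membership in \NP{} is immediate, since a witness consists of the manipulator's vote together with an extension of the partial profile, and the Copeland winner is computable in polynomial time.

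Given an X3C instance $(\mathcal{U}, \mathcal{S})$ with $|\mathcal{U}| = m$ (assumed divisible by $3$) and $|\mathcal{S}| = t$, I construct a WM instance over the candidate set $\mathcal{U} \cup \{c, w, w_1, w_2, w_3\}$. For each $S_i \in \mathcal{S}$, introduce one partial vote whose linear backbone is $(\mathcal{U} \setminus S_i) \succ \{w, S_i, c\} \succ w_1 \succ w_2 \succ w_3$, with $w$ left incomparable with $c$ and with every element of $S_i$. The intended semantics is that extending such a partial vote to place $c \succ w$ corresponds to ``selecting'' $S_i$ in a cover.

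Using Lemma \ref{maxminexist}, I then append polynomially many complete votes to calibrate pairwise margins. The calibration targets $D(c,w)$ so that $c$ beats $w$ pairwise iff at least $m/3$ of the partial votes are extended in the selecting direction; $D(u,w)$ for each $u \in \mathcal{U}$ so that $w$ beats $u$ pairwise unless $u$ lies in some selected $S_i$; the contests among $w_1, w_2, w_3$ and $c$ so that the Copeland score that $c$ must strictly exceed is pinned down; and all remaining margins to $\pm 1$ so that the single manipulator's vote cannot flip any contest that we do not want to be flippable. The manipulator then casts $c \succ w \succ w_1 \succ w_2 \succ w_3 \succ \mathcal{U}$, giving $c$ the maximum per-pair boost it can receive.

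The correctness analysis mirrors the maximin case. In the forward direction, an exact cover $S_{i_1}, \dots, S_{i_{m/3}}$ prescribes extending the corresponding partial votes with $c \succ w$ and the remaining ones with $w \succ c$; the calibration then makes $c$ beat every $u \in \mathcal{U}$ as well as $w$ pairwise, so that $c$ is the unique Copeland winner. In the reverse direction, if $c$ wins in some extension, then $c$ must beat $w$, forcing at least $m/3$ of the partial votes to be extended in the selecting direction; if even one element $u$ were uncovered, $w$ would beat $u$, pushing $w$'s Copeland score above $c$'s and contradicting $c$'s victory. Hence the selected sets cover $\mathcal{U}$, and the count forces the cover to be exact.

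The main obstacle will be the calibration step. One has to line up several conditions simultaneously: the Copeland score of $c$ must track precisely its number of pairwise wins against $\mathcal{U}$ and $w$; the candidates $w_1, w_2, w_3$ must absorb enough slack that no unintended extension can accidentally make $c$ (or anyone else) win; and the single manipulator's vote, which shifts every pairwise margin by exactly $\pm 1$, has to be absorbed into carefully chosen buffer margins on all contests that should not be flippable.
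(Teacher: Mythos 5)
Your high-level plan (reduce from X3C, use Lemma~\ref{maxminexist} to calibrate pairwise margins, buffer the unimportant contests against the manipulator's $\pm 1$ shift) matches the paper's, but the construction you describe does not work for Copeland, for two concrete reasons. First, your reverse-direction argument --- ``if even one element $u$ were uncovered, $w$ would beat $u$, pushing $w$'s Copeland score above $c$'s'' --- is arithmetically false. An extra pairwise victory raises $w$'s Copeland score by exactly $1$; since $w$ beats at most the uncovered elements of $\mathcal{U}$ plus a handful of auxiliary candidates while $c$ (whose order relative to every $u\in\mathcal{U}$ is \emph{fixed} in your backbone) beats all of $\mathcal{U}$, $w$'s score stays far below $c$'s regardless of coverage. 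For the covering condition to bite, it must be encoded in contests that move $c$'s own score: the paper leaves $c$ versus $S_i$ incomparable in the $i$-th partial vote and sets $D(c,u_i)=t-1$, so that $c$ beats $u_i$ iff some selected set contains $u_i$. Your $w_1,w_2,w_3$ cycle is also inert here: in the maximin proof it pins a score threshold because maximin is determined by margins, but for Copeland it only hands each of $w_1,w_2,w_3$ one victory and pins nothing.

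Second, you only enforce a \emph{lower} bound on the number of selected sets ($c$ beats $w$ iff at least $m/3$ votes are extended in the selecting direction), and selecting more sets only helps $c$. Hence selecting \emph{all} $t$ sets covers $\mathcal{U}$ (whenever $\bigcup_i S_i=\mathcal{U}$) and makes $c$ win, even when no \emph{exact} cover exists; your reduction would then decide a trivial problem. The paper closes this hole with two extra candidates $z$ and $d$: it fixes $D(w,c)=4t$ so that $c$ always loses to $w$ and its Copeland score is capped at $m+2$, guarantees $z$ a score of at least $m+1$ (it always beats $\mathcal{U}$ and $d$), and sets $D(z,w)=t-\frac{2q}{3}+1$ so that $z$ also beats $w$ --- reaching $m+2$ and blocking $c$ --- as soon as more than $q/3$ votes are extended in the selecting direction. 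The conjunction ``$c$ beats every $u_i$'' (all elements covered) and ``$z$ does not beat $w$'' (at most $q/3$ sets selected) is what forces exactness. Your proposal has no mechanism playing the role of $z$, and without one the equivalence fails.
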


\begin{proof}
 We reduce X3C to WM for Copeland rule. Let $(\mathcal{U},\mathcal{S})$ is an X3C instance. Assume $t=|\mathcal{S}|$ is odd without loss of generality (if not, replicate any set from $\mathcal{S}$ ). Let $m=|\mathcal{U}|$. We construct a corresponding WM instance for Copeland as follows.
 $$ \text{Candidate set } \mathcal{C} = \mathcal{U} \cup \{c, w, z, d\} $$
 Partial votes $\mathcal{P}$ :
 $$ \forall i\le t, (\mathcal{U}\setminus S_i) \succ z\succ c\succ d\succ S_i\succ w \setminus \{ \{z,c\} \times S_i\cup \{d,w\} \}$$
 Now we add $poly(m,t)$ many complete votes to achieve the following margin of victories in pairwise elections.
 \begin{itemize}
  \item $D(c,d) = D(c,z) = D(z,d) = D(w,c) = 4t$
  \item $D(u_i,d) = D(z,u_i) =  = 4t, \forall u_i\in \mathcal{U}$
  \item $D(c,u_i) = t-1, \forall u_i\in \mathcal{U}$
  \item $D(z,w) = t-\frac{2q}{3}+1$
 \end{itemize}
 We also arrange the score differences among $u_i,u_j$ arbitrarily, while ensuring that every $u_i$ is defeated by at least one $u_j$.
 We have only one manipulator who tries to make $c$ winner. This completes the description of the WM instance. We now turn to the proof of equivalence. 
 
 In the forward direction, suppose (by renaming) that $S_1, \dots, S_{\frac{q}{3}}$ forms an exact set cover. We claim that the following extensions, then, makes $c$ a Copeland winner.
 $$ \text{Manipulator's vote : } c\succ w\succ z\succ d\succ \mathcal{U} $$
 Extension of $\mathcal{P}$ :
 $$ i\le \frac{q}{3}, (\mathcal{U}\setminus S_i) \succ z\succ c\succ d\succ S_i\succ w $$
 $$ i\ge \frac{q}{3} + 1, (\mathcal{U}\setminus S_i) \succ d\succ S_i\succ w\succ z\succ c $$
 
 Indeed, notice that for any candidate $u_i$, $u_i \succ c$ in at most $(t-1)$ votes, since there is at least one vote (corresponding to the set that covers the element $u_i$) where $c \succ u_i$. Together with the vote of the manipulator, we have that $D(c,u_i) = 1$. Therefore, $c$ defeats all the candidates in $\mathcal{U}$, and since $c$ already defeats the candidates $d$ and $z$, it's Copeland score is $(m+2)$. It is easy to check that the best Copeland score achieved by any other candidate is at most $(m+1)$: for instance, the candidate $z$ defeats all candidates in $\mathcal{U}$ and $d$, but is defeated by $c$ and $w$ in the proposed extension. Indeed, $w \succ z$ in $(t - 2q/3 + 1)$ votes (these are the votes corresponding to $i > q/3$ and the vote of the manipulator), so $D(z,w) = 0$ in the extended profile. All other candidates are easily seen to be defeated by at least three other candidates, and therefore their score is not more than $(m+1)$.  
 
 For the reverse direction, notice that Copeland score of $z$ is at least $m+1$ since it defeats all candidates in $\mathcal{U}$ and $d$. Also notice that the Copeland score of $c$ can be at most $m+2$ since $c$ loses to $w$. Hence the only way $c$ can be the unique winner is that $c$ defeats all candidates in $\mathcal{U}$ and $z$ does not defeat $w$. Without loss of generality, we assume that $c$ is at top place in the manipulator's vote and $w\succ z$ in the manipulator's vote. This require $w\succ z$ in at least $t-\frac{q}{3}$ extensions of $\mathcal{P}$. The sets $S_i$ in the remaining of the extensions where $z\succ w$ forms an exact set cover. If not then suppose, some candidate $u_i\in \mathcal{U}$ is not covered. Then, notice that $u_i \succ c$ in all $t$ votes, making $D(c,u_i) = -1$. Even with the manipulator voting in favor of $c$, we have that can not defeat $u_i$ and thus can not achieve the desired Copeland score of $m+2$.
\end{proof}

We have following result for the SM problem for the Copeland rule.

\begin{theorem}\label{sm_copeland_hard}
SM is \NPH{} for Copeland voting rule when we have only one manipulator. 
\end{theorem}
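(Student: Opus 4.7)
The plan is to reduce from X3C, adapting the construction used in Theorem \ref{wm_copeland_hard} for WM-Copeland to handle the universal quantifier over extensions that distinguishes SM from WM. Given an X3C instance $(\mathcal{U},\mathcal{S})$ with $|\mathcal{U}| = 3k$ and $|\mathcal{S}| = t$, I would take the candidate set $\mathcal{C} = \mathcal{U} \cup \{c, z, w, d\}$ (possibly with additional book-keeping candidates), with $c$ as the preferred candidate. For each $S_i \in \mathcal{S}$ I would introduce a partial vote whose only incomparabilities live between the elements of $S_i$ and the auxiliary candidate $z$ (analogous to the WM reduction, but designed so that the \emph{adversary's} choice in each extension represents whether the set $S_i$ is used to cover or left out of the cover). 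I would then use the McGarvey-style device of Lemma \ref{maxminexist} to add polynomially many complete votes that realize prescribed pairwise margins among the pairs of candidates that are unaffected by extensions, pinning down the Copeland scores of the auxiliary candidates $z, w, d$ and the gap $D(c, u_i)$ for each $u_i \in \mathcal{U}$ so tightly that a single swap from the manipulator or an extension can flip a pairwise comparison.

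The tuning of margins is the most delicate part of the construction, and it is designed so that the following equivalence emerges. For the forward direction (X3C $\Rightarrow$ SM), given an exact cover $\{S_{i_1},\ldots,S_{i_k}\}$, the manipulator casts a vote of the form $c \succ w \succ z \succ d \succ \mathcal{U}$ (with the elements of $\mathcal{U}$ ordered in a canonical way dictated by the cover). For every adversarial extension of $\mathcal{P}$, the exact-cover property guarantees that each $u_i \in \mathcal{U}$ is beaten by $c$ in enough votes (specifically, in the partial vote corresponding to the set covering $u_i$, together with the manipulator's vote), while the auxiliary candidates are arranged so that $c$ attains strictly higher Copeland score than all of them. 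For the reverse direction (SM $\Rightarrow$ X3C), one extracts a candidate cover from the structure of the manipulator's winning vote and argues by contradiction: if the extracted cover misses some element $u_j$, the adversary exhibits an extension in which $u_j$ is ordered above $c$ in every partial vote where it is free to do so, pushing $D(c, u_j)$ negative and raising $u_j$'s Copeland score above $c$'s, contradicting the assumption.

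The main obstacle will be the reverse direction, specifically proving that the adversary always has an ``exploiting'' extension available when no exact cover exists, regardless of the particular vote cast by the manipulator. Because the single manipulator's vote couples all pairwise margins $D(\cdot, \cdot)$ simultaneously, I would have to verify that after fixing any manipulator vote $v$, the set of candidates that can threaten $c$ in \emph{some} extension is captured by exactly the elements that the vote $v$ fails to ``cover'' in the X3C sense — no more and no less. This requires choosing the margin offsets so that (i) a single manipulator vote can raise $D(c, u_i)$ above zero if and only if $u_i$ is placed below $c$ in enough of the relevant partial votes, (ii) no single extension can simultaneously hurt more candidates than the X3C structure allows, and (iii) auxiliary candidates like $z$ and $w$ do not inadvertently become winners in pathological extensions. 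Once these calibrations are in place, the equivalence between SM on the constructed instance and the existence of an exact 3-cover in $(\mathcal{U},\mathcal{S})$ follows, yielding the claimed \NPH{} lower bound.
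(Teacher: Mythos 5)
There is a genuine structural gap here, and it lies in the forward direction rather than where you locate it. Your plan transplants the WM construction while keeping $c$ as the distinguished candidate and asking the \emph{extensions} to ``represent whether the set $S_i$ is used to cover or left out of the cover.'' But in SM the extension is universally quantified: the adversary chooses it to make $c$ lose, and it therefore cannot carry the existential witness for X3C the way it does in the WM reduction. Concretely, your claim that ``the exact-cover property guarantees that each $u_i \in \mathcal{U}$ is beaten by $c$ \ldots in the partial vote corresponding to the set covering $u_i$'' fails: that partial vote leaves the order between $c$ and the elements of $S_i$ free, so the adversary will extend precisely that vote with $u_i \succ c$, whether or not a cover exists. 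The only existential object remaining is the manipulator's single linear order, and it cannot encode a choice of $|\mathcal{U}|/3$ sets out of $t$: putting $c$ on top adds $+1$ to every margin $D(c,\cdot)$ uniformly, and the ``canonical ordering of $\mathcal{U}$ dictated by the cover'' at the bottom of the vote does not affect any $D(c,u_i)$ at all, only the margins among the $u_i$'s themselves. Hence your reverse-direction plan of ``extracting a cover from the structure of the manipulator's winning vote'' has nothing to latch onto, and the yes-to-yes direction of the reduction never gets off the ground.

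The paper's proof avoids this by changing the distinguished candidate from $c$ to the auxiliary candidate $z$, observing that the manipulator's vote may be assumed without loss of generality to place $z$ first and $c$ last, and re-tuning exactly three margins relative to the WM construction ($D(c,d)=-\infty$, $D(c,u_i)=t$, and $D(z,w)=t-\frac{2q}{3}-2$, with $t$ even); the set-cover structure then re-enters through the analysis of which extensions can prevent $z$ from reaching Copeland score $m+2$, mirroring the WM argument. If you want to salvage your version with $c$ as the preferred candidate, you would need a fundamentally different mechanism for where the X3C witness lives --- the margin calibrations you list in (i)--(iii) cannot repair a reduction whose quantifiers point the wrong way.
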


\begin{proof}
 Change following in the reduction in the proof of \MakeUppercase theorem\nobreakspace \ref {wm_copeland_hard}. Here we assume that $t$ is even.
 \begin{itemize}
  \item $ D(c,d) = -\infty $
  \item $ D(c,u_i) = t $
  \item $ D(z,w) = t-\frac{2q}{3}-2 $
 \end{itemize}
 We have only one manipulator who tries to make $z$ winner. The proof is similar to the proof of \MakeUppercase theorem\nobreakspace \ref {wm_copeland_hard} with the only difference being here we can assume without loss of generality that manipulator puts $z$ at first position and $c$ at last position.
\end{proof}

\begin{theorem}\label{wm_bucklin_hard}
The WM problem is \NPC{} for Bucklin voting rule when we have only one manipulator. 
\end{theorem}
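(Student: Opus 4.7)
The plan is to reduce from X3C, following the template used for maximin and Copeland in the preceding theorems. Given an X3C instance $(\mathcal{U},\mathcal{S})$ with $|\mathcal{U}|=q$ and $|\mathcal{S}|=t$, I would build a WM instance for Bucklin with candidate set $\mathcal{C}=\mathcal{U}\cup\{c,w_1,w_2,w_3,\ldots\}$, where the $w_j$'s (and possibly a few further dummies) serve both to fix the parity of the electorate and to threaten $c$ with a low Bucklin score unless the manipulator's single vote is devoted to placing $c$ at the top. For each $S_i\in\mathcal{S}$, I would introduce one partial vote in which the relative order of $c$ and the three candidates of $S_i$ is left unspecified around some critical position $l$; every other pair would be fixed. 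Extending such a partial vote then corresponds to either \emph{choosing} $S_i$ (placing $c$ inside the top $l$ and pushing the elements of $S_i$ just below $l$) or \emph{not choosing} $S_i$ (placing the elements of $S_i$ in the top $l$ and pushing $c$ out). I would then append a block of complete votes, constructed via an analogue of \MakeUppercase lemma\nobreakspace \ref {score_gen} for Bucklin (or an explicit padding), to pin the ``baseline'' number of top-$l$ occurrences of every candidate to a target value.

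The parameters would be tuned so that the majority threshold for Bucklin score $l$ is met by $c$ if and only if at least $q/3$ of the partial votes for $\mathcal{S}$ are extended in the ``choose'' direction (the manipulator's vote contributing the final needed occurrence, after being forced by the $w_j$ gadget to place $c$ on top), while each element $u_i\in\mathcal{U}$ would be engineered to meet the majority threshold at level $l$ as soon as it is pushed above $l$ in \emph{two or more} of the chosen partial votes. In that case $u_i$ would either beat $c$ outright or defeat $c$ on the tie-break (which, recall, compares the number of top-$l$ occurrences). Hence in any successful extension, each $u_i$ is pushed above $l$ in at most one partial vote among the $q/3$ chosen ones, and since every unchosen $S_i$ places all three of its members above $l$, the chosen sets must cover $\mathcal{U}$ exactly.

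For the forward direction I would take an exact cover $I$, let the manipulator cast $c\succ w_1\succ w_2\succ w_3\succ\cdots$, and extend the partial vote for $S_i$ in the ``choose'' direction exactly when $i\in I$; a direct count shows $c$ reaches the majority at level $l$ while every $u_i$, $w_j$, and dummy lags by at least one. For the reverse direction, given any manipulator vote and extension in which $c$ wins, the $w_j$ gadget forces $c$ to the top of the manipulator's vote, so $c$'s top-$l$ count across the partial votes is at least $q/3$; the $q/3$ partial votes where $c$ lies in the top $l$ must then form an exact cover of $\mathcal{U}$, since otherwise some element would be pushed above $l$ twice and would preempt $c$. Membership in \NP{} is immediate, as a manipulator vote together with an extension of $\mathcal{P}$ is a polynomial-size certificate that can be verified by computing Bucklin scores.

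The main obstacle will be the numerical calibration. Bucklin's majority threshold is $\lceil n/2\rceil+1$ top-$l$ occurrences, with tie-breaking by the count of such occurrences, so the parity of $n$, the choice of the critical level $l$, and the baseline top-$l$ counts supplied by the padding votes must all be set so that the construction is ``tight'' in the sense that (i) $c$ misses majority at every level below $l$, (ii) $c$ achieves majority at level $l$ exactly when $q/3$ sets are chosen, and (iii) every $u_i$ sits at precisely one fewer top-$l$ occurrence than needed for majority at level $l$. Getting these three constraints to hold simultaneously, while still leaving room for the manipulator's vote to be decisive, is the delicate part of the construction; once the parameters are fixed the equivalence argument proceeds along the lines above.
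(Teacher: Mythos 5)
Your overall strategy coincides with the paper's: it too reduces from X3C, encodes each set $S_i$ as a partial vote in which the order of $c$ and the members of $S_i$ relative to a block of dummy candidates is left unspecified, pads with complete votes to fix baseline top-$\ell$ counts, and argues that $c$ attains majority at the critical level if and only if the ``chosen'' extensions form an exact cover, with each $u_i\in\mathcal{U}$ engineered to be exactly one top-$\ell$ occurrence short of majority. So there is no divergence of method to report.

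The problem is that your proposal stops exactly where the proof begins. You never fix the candidate set, the critical level $\ell$, the partial votes, the padding profile, or the gadget that forces the manipulator's hand; you explicitly defer the ``numerical calibration'' as the delicate part. But that calibration is not an afterthought --- it is the entire content of the theorem, and it is where such constructions typically break (for instance, if the blockers $w_j$ are already near majority at level $\ell$, your proposed manipulator vote $c\succ w_1\succ w_2\succ w_3\succ\cdots$ would push them over; the paper instead forces the manipulator to vote $c\succ D\succ\cdots$, where $D$ is a block of $m+1$ harmless dummies whose Bucklin scores are irrecoverably large). For the record, the paper's instantiation takes $\mathcal{C}=W\cup D\cup\mathcal{U}\cup\{c,w\}$ with $W$ and $D$ of size $m+1$ each, partial votes $w_1\succ\cdots\succ w_{m+1}\succ S_i\succ c\succ(\mathcal{U}\setminus S_i)\succ D$ in which only the pairs between $\{w_{m-2},w_{m-1},w_{m},w_{m+1}\}$ and $\{c\}\cup S_i$ are unresolved, and a padding profile of $t$ copies of $\mathcal{U}\succ c\succ\cdots$, $\frac{m}{3}-1$ copies of $\mathcal{U}\succ w\succ c\succ\cdots$ and $\frac{m}{3}+1$ copies of $D\succ w_1\succ\cdots$, so that $n=2t+\frac{2m}{3}+1$ and the critical level is $m+1$; checking your conditions (i)--(iii) against these numbers is the bulk of the argument and cannot be waved through. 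One further mismatch: you invoke a tie-break by the number of top-$\ell$ occurrences, but the Bucklin rule as defined in this paper has no such tie-break --- the winner is simply the candidate of minimum Bucklin score, and it is the unique-winner requirement that rules out ties. Your reverse direction survives this (a tie at level $\ell$ already prevents $c$ from winning uniquely), but the forward-direction counting must be carried out against the rule as actually defined.
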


\begin{proof}
We reduce the X3C problem to WM for Bucklin. The reduction is along the lines of the reduction given in \cite{xia2008determining}. Let $\mathcal{U} = \{u_1, \ldots, u_m\}$ and $\mathcal{S} := \{S_1,S_2, \dots, S_t\}$ be an instance of X3C, where each $S_i$ is a subset of $\mathcal{U}$ of size three. We construct a WM instance based on $(\mathcal{U},\mathcal{S})$ as follows.
$$ \text{Candidate set : } \mathcal{C} = W\cup D\cup \mathcal{U}\cup \{c,w\},$$ 
$$\text{ where } W=\{w_1,\ldots, w_{m+1}\}, D=\{d_1,\ldots, d_{m+1}\}$$
We first introduce the following partial votes $P_1$ in correspondence with the sets in the family as follows.
$$ w_1\succ \ldots \succ w_{m+1}\succ S_i\succ c\succ (\mathcal{U}\setminus S_i)\succ D$$ 
$$ \setminus \left(\{\{w_{m-2},w_{m-1},w_q,w_{m+1}\} \times (\{c\}\cup S_i)\}\right), \forall i\le t$$
Further, we introduce the following additional complete votes $P_2$:
\begin{itemize}
 \item $t$ copies of $\mathcal{U}\succ c\succ \ldots$
 \item $\frac{m}{3}-1$ copies of $\mathcal{U}\succ w\succ c\succ \ldots$
 \item $\frac{m}{3}+1$ copies of $D\succ w_1\succ \ldots$
\end{itemize}
The total number of voters, including the manipulator, is $2t+\frac{2m}{3}+1$. Now we show equivalence of the two instances. 

In the forward direction, suppose we have an exact set cover. Let the vote of the manipulator be $c\succ D\succ \cdots$. This makes $c$ win the election for the extension where the votes in $P_1$ are extended as follows.

If $S_i$ is part of set cover then we have:
\begin{equation*}
w_1\succ \ldots \succ w_{m-3}\succ S_i\succ c\succ w_{m-2}\succ \ldots \succ w_{m+1} \succ (\mathcal{U}\setminus S_i)\succ D
\end{equation*}
On the other hand, if $S_i$ is not part of set cover then we have:
$$w_1\succ \ldots \succ w_{m+1}\succ S_i\succ c\succ (\mathcal{U}\setminus S_i)\succ D.$$

Notice that the Bucklin score of $c$ is $m+1$ in this extension, since it appears in the top $m+1$ positions in the $m/3$ votes corresponding to the set cover, $t$ votes from the complete profile $P_2$ and one vote of the manipulator. 
For any other candidate $u_i\in \mathcal{U}$, $u_i$ appears in the top $m+1$ positions once in $P_1$ and $t+\frac{m}{3}-1$ times in $P_2$. Thus, $u_i$ does not get majority in top $m+1$ positions making its Bucklin score at least $m+2$.
For any other candidate $s_i$ in $S$, $s_i$ appears at most $(t - m/3 + 1)$ times in the top $(m+2)$ positions among the votes in $P_1$ (since the $m/3$ extensions corresponding to the set cover admit at most one instance of $s_i$ in the top $m+2$ spots). Further, note that $s_i$ appears exactly $(t + m/3 - 1)$ times in the top $(m+2)$ positions in $P_2$. Therefore, the total number of appearances of any $s_i \in S$ among the top $(m+2)$ positions, across the voting profile, is at most $(2t + 2m/3)$, which is shy of majority. 
Similarly, the candidate $w_1$ appears in the top $(m+2)$ positions $(2t + 2m/3 + 1)$ times, but only $t$ times in the top $(m+1)$. 
Similarly, the candidate $w_1$ appears only $t$ times in the top $m+1$ positions. 
The same can be argued for the remaining candidates in $D, W$ and $w$. 

In the reverse direction, suppose the WM is a yes instance. We may assume without loss of generality that the manipulator's vote is $c\succ D\succ \cdots$, since the Bucklin score of the candidates in $D$ is at least $2m$. As $w_1$ is ranked within top $m+2$ positions in $t+\frac{m}{3}+1$ votes, for $c$ to win, $c\succ w_{m-2}$ must hold in at least $\frac{m}{3}$ votes in $P_1$. In those votes, all the candidates in $S_i$ are also within top $m+2$ positions. Now if any candidate in $\mathcal{U}$ is within the top $m+2$ positions in $P_1$ more than once, then $c$ will not be the unique winner. Hence, the $S_i$'s corresponding to the votes where $c\succ w_{m-2}$ form an exact set cover. 
\end{proof}

\section{Polynomial Time Algorithms for the WM and SM Problems}\label{sec:poly}

Now we present polynomial time algorithms for the WM and SM problems for various voting rules.

\begin{theorem}\label{pv_easy}
 The WM problem is in \Pb{} for the plurality and veto voting rules.
\end{theorem}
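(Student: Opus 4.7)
The plan is to treat plurality and veto as dual problems and reduce each to a polynomial-time feasibility check via bipartite matching/maximum flow.

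For plurality, the only position that matters for scoring is the top of each vote. I would first argue that it is without loss of generality to have every manipulator place $c$ in the top position, and to extend every non-manipulator partial vote in which $c$ is a maximal element (no other candidate is strictly above $c$) so that $c$ sits at the top. Both reductions are safe since they only raise $c$'s score and can only lower the score of any potential challenger. Let $s_c$ denote the resulting plurality score of $c$. For the remaining non-manipulator partial votes (those in which $c$ is not maximal), we must choose a maximal non-$c$ element to occupy the top position; these choices determine the plurality counts of all non-$c$ candidates. The existence of an extension in which $c$ wins uniquely then reduces to a bipartite $b$-matching feasibility problem: the remaining partial votes on one side, the non-$c$ candidates with capacity $s_c - 1$ on the other, and an edge between vote $v$ and candidate $a$ whenever $a$ is maximal in $v$. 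A saturating matching exists iff a maximum flow equal to the number of such votes exists, which is computable in polynomial time.

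For veto, the situation is symmetric: only the bottom position contributes, and it contributes $-1$. WLOG no manipulator places $c$ at the bottom (the manipulator always has some non-$c$ candidate available to veto), and in each non-manipulator partial vote where $c$ is not the unique minimal element, the extension is chosen to put some non-$c$ minimal candidate at the bottom. Let $N_c$ be the number of non-manipulator partial votes in which $c$ is the unique minimal element; then $c$'s final veto score equals $-N_c$, and every other candidate must be bottom-ranked in at least $N_c + 1$ votes for $c$ to win uniquely. The $|M|$ manipulator vetoes can be assigned freely to any non-$c$ candidates, while each flexible non-manipulator vote must be assigned to one of its non-$c$ minimal elements. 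Deciding whether such a distribution exists is a feasibility question for a transportation problem with lower bounds on the candidate side, which is standard to solve in polynomial time by reduction to a max-flow with lower bounds.

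The main obstacle is justifying the WLOG reductions cleanly, especially for veto: one must verify that avoiding $c$ at the bottom whenever possible never causes a subsequent infeasibility with respect to the per-candidate veto lower bound of $N_c + 1$ (this follows from a straightforward exchange argument, since swapping $c$ into the bottom slot of any flexible vote simultaneously weakens $c$'s position and removes a unit of demand from some challenger, which only helps challengers beat $c$). Once the combinatorial setup is fixed, the polynomial-time claim for both rules follows from standard max-flow machinery.
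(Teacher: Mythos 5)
Your argument is correct, but it takes a genuinely different route from the paper. The paper's proof is a two-line reduction: it observes (Proposition~\ref{pw_hard}) that WM many-to-one reduces to PW by treating the manipulators' ballots as empty partial votes, and then invokes the known result that PW is in \Pb{} for plurality and veto (Betzler and Dorn). You instead give a direct, self-contained algorithm, which in effect re-derives that PW result for these two rules with the manipulators' votes folded in: the WLOG normalizations (put $c$ on top of every manipulator ballot and of every non-manipulator vote where $c$ is maximal; never veto $c$ unless forced) are justified by valid exchange arguments, and the residual assignment problems are correctly modeled as capacitated bipartite $b$-matching for plurality and max-flow with lower bounds for veto. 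One small wording slip: in the veto exchange argument, moving $c$ to the bottom of a flexible vote removes a unit of \emph{supply} of vetoes to the challengers while \emph{raising} each challenger's required veto count, not ``removing a unit of demand''; both effects point the same way, so the conclusion stands. What each approach buys: the paper's proof is shorter and modular, inheriting polynomial-time solvability from the PW literature for free, whereas yours is explicit and does not depend on the external PW algorithm --- but it is correspondingly longer and essentially reproves a known special case.
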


\begin{proof}
 The PW problem is in \Pb{} for the plurality and the veto voting rules~\cite{betzler2009towards}. Hence, the result follows from \MakeUppercase proposition\nobreakspace \ref {pw_hard}.
\end{proof}

\begin{theorem}\label{sm_k_easy}
The SM problem is in \Pb{} for the $k$-approval and $k$-veto voting rules, for any $k$.
\end{theorem}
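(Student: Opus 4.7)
The plan is to first reduce $k$-veto to $(m-k)$-approval: adding the constant $1$ to the $k$-veto score vector $(0,\ldots,0,-1,\ldots,-1)$ produces the $(m-k)$-approval score vector, and shifting all scores by a constant preserves the winner. So it suffices to give a polynomial-time algorithm for $k$-approval.

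For $k$-approval, I would exploit that each partial vote contributes either $0$ or $1$ to each candidate, so scores decompose additively over votes and the extension of each vote can be chosen independently. Therefore, for any $d \neq c$,
\[
\min_E \bigl[\mathrm{score}_E(c) - \mathrm{score}_E(d)\bigr] \;=\; \sum_{v \in \mathcal{P}} \delta_v(c,d),
\]
where $\delta_v(c,d) \in \{-1,0,+1\}$ is the per-vote minimum difference. I would compute $\delta_v(c,d)$ via two polynomial-time checks on the partial order of $v$: (i) is there a linear extension of $v$ with $d$ in the top $k$ and $c$ outside (giving $\delta_v = -1$); and (ii) does every extension force $c$ in the top $k$ and $d$ outside (giving $\delta_v = +1$)? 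Each reduces to the existence of a $k$-element up-set of $v$ that contains one prescribed element and excludes another; after taking the upward closure of forced-in elements and the downward closure of forced-out ones, this becomes a simple size comparison, tractable because every up-set size from $0$ to the poset size is attainable by iteratively adjoining minimal elements.

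Next, I would argue that WLOG each manipulator places $c$ in her top $k$, contributing $n_M := |M|$ to $c$'s score (any other strategy strictly worsens $c$'s margin against some other candidate). Her remaining freedom is which $k-1$ non-$c$ candidates to also include in her top $k$. Let $b_d$ be the number of manipulators placing $d \neq c$ in their top $k$, so $0 \le b_d \le n_M$ and $\sum_{d \neq c} b_d = n_M(k-1)$; conversely, any such integer vector $(b_d)$ is realizable by concrete manipulator votes via a standard bipartite degree-sequence argument (verifiable through Gale--Ryser, since the constant row sums $k-1$ and the bound $b_d \le n_M$ trivially satisfy the required inequalities). The strong-manipulation condition then becomes $b_d < T_d := n_M + \sum_{v \in \mathcal{P}} \delta_v(c,d)$ for every $d \neq c$. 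Hence the SM instance is a YES-instance iff there exist integers $(b_d)$ with $0 \le b_d \le \min(n_M,\,T_d-1)$ and $\sum_d b_d = n_M(k-1)$, which in turn holds iff every $T_d \ge 1$ and $\sum_d \min(n_M, T_d-1) \ge n_M(k-1)$ --- both checkable in polynomial time.

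The main obstacle I expect is the careful characterization of $\delta_v(c,d)$ when $v$ is a nontrivial partial order, particularly when handling interactions such as $c \succeq d$ being forced by $v$. Once this is reduced to the up-set existence question, however, the remainder is a clean counting-based feasibility argument, yielding a polynomial-time algorithm for SM under both $k$-approval and $k$-veto for every $k$.
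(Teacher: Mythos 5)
Your proposal is correct and follows essentially the same route as the paper: compute, independently for each candidate $d\neq c$ and each partial vote, the worst-case contribution to the margin of $c$ over $d$ (valid because extensions of distinct votes are independent and the two universal quantifiers commute), place $c$ in the manipulators' top $k$, and then check whether the remaining approvals can be allocated without violating any candidate's budget. The only difference is cosmetic: the paper phrases the final allocation step as a max-flow computation, whereas you resolve it by an explicit capacity-counting condition justified via Gale--Ryser, and you make the $k$-veto case explicit through the affine shift to $(m-k)$-approval rather than handling scores directly.
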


\begin{proof}
 For the time being, we just concentrate on non-manipulators' votes. For each candidate $c^{\prime} \in \mathcal{C}\setminus\{c\}$, calculate the maximum possible value of $s^{max}_{NM}(c,c^{\prime}) = s_{NM}(c^{\prime}) - s_{NM}(c)$ from non-manipulators' votes, where $s_{NM}(c)$ $(s_{NM}(c^{\prime}))$ is the score that the candidate $c$ $(\text{respectively }c^{\prime})$ receives from the votes of the non-manipulators. This can be done by checking all possible $O(m^2)$ pair of positions for $c$ and $c^{\prime}$ in each vote $v$ and choosing the one which maximizes $s_v(c^{\prime}) - s_v(c)$ from that vote. Now, we fix the position of $c$ at the top position for the manipulators' votes and we check if it is possible to place other candidates in the manipulators' votes such that the final value of $s^{max}_{NM}(c,c^{\prime}) + s_M(c^{\prime}) - s_M(c)$ is negative which can be solved easily by reducing it to the max flow problem.
\end{proof}

We note that \MakeUppercase theorem\nobreakspace \ref {sm_k_easy} and \MakeUppercase theorem\nobreakspace \ref {pv_easy} hold for any number of manipulators. We now prove that the SM problem for scoring rules is in \Pb{} for one manipulator.

\begin{theorem}\label{sm_sr_easy}
The SM problem is in \Pb{} for any scoring rule when we have only one manipulator.
\end{theorem}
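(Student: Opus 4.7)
The plan is to reduce the problem to computing a single bipartite matching after a polynomial amount of preprocessing on the partial votes. First I would argue that in any successful manipulation, the sole manipulator may be assumed to place $c$ at the top of her vote: since $\alpha_1 \ge \alpha_i$ for every $i$, swapping $c$ upward weakly increases $s_M(c)$ and weakly decreases $s_M(x)$ for each displaced rival $x$. Hence we may fix $s_M(c) = \alpha_1$ and it only remains to decide a bijection $p:\mathcal{C}\setminus\{c\}\to\{2,\ldots,m\}$ describing where the manipulator places the remaining candidates.

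Next I would observe that, for a fixed manipulator's vote, strong manipulation succeeds iff for every $c'\ne c$ and every extension of $\mathcal{P}$ we have $s_M(c)+s_{NM}(c) > s_M(c')+s_{NM}(c')$. Because the adversarial extension is allowed to depend on $c'$, this is equivalent to the per-rival condition
$$ \alpha_1 - \alpha_{p(c')} \;>\; d(c') \;:=\; \max_{\text{extensions of }\mathcal{P}}\bigl(s_{NM}(c')-s_{NM}(c)\bigr). $$
Since extensions of distinct partial votes are chosen independently, $d(c')$ decomposes as a sum over the partial votes $v\in\mathcal{P}$ of the per-vote quantity $\delta_v(c') := \max_{\pi\text{ extending }v}(s_\pi(c')-s_\pi(c))$. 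For each $v$ and each $c'$, I would compute $\delta_v(c')$ by enumerating all $O(m^2)$ pairs of target positions $(p_c,p_{c'})$ and, for each, testing in polynomial time whether some linear extension of $v$ simultaneously places $c$ at position $p_c$ and $c'$ at position $p_{c'}$; the maximum value of $\alpha_{p_{c'}}-\alpha_{p_c}$ over feasible pairs gives $\delta_v(c')$.

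With the thresholds $T(c') := \alpha_1 - d(c')$ computed, the remaining task is to find a bijection $p:\mathcal{C}\setminus\{c\}\to\{2,\ldots,m\}$ with $\alpha_{p(c')} < T(c')$ for every $c'$. I would encode this as a bipartite graph with the $m-1$ rival candidates on one side, the positions $\{2,\ldots,m\}$ on the other, and an edge $(c',j)$ whenever $\alpha_j < T(c')$; a manipulator's vote achieving strong manipulation exists iff this graph admits a perfect matching, which can be tested in polynomial time. The matching, if it exists, directly describes the manipulator's linear order.

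The main obstacle I anticipate is the per-vote feasibility check for an ordered pair of positions. Given a partial order $v$, I would partition the remaining $m-2$ candidates according to their forced relationships with $c$ and with $c'$ (above both, below both, between them, or incomparable to one or both), and then verify via Hall's condition (equivalently, a small flow or a direct counting argument) that these candidates can be slotted into the still-unassigned positions consistently with $v$. Once this polynomial-time primitive is established, the rest of the algorithm — accumulating the sums $d(c')$ and solving one final bipartite matching — is routine.
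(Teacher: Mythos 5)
Your proposal is correct and follows essentially the same route as the paper: fix $c$ at the top of the manipulator's vote, compute for each rival $c'$ the worst-case score gap $\max_{\text{extensions}}(s_{NM}(c')-s_{NM}(c))$ by decomposing over the partial votes and enumerating $O(m^2)$ position pairs per vote, and then decide the placement of the remaining candidates via a bipartite perfect matching between rivals and positions $\{2,\ldots,m\}$. You merely spell out more explicitly two points the paper leaves implicit, namely why the adversarial extension may be chosen per rival and how the per-vote position-pair feasibility test is carried out.
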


\begin{proof}
For each candidate $c^{\prime} \in \mathcal{C}\setminus\{c\}$, calculate $s^{max}_{NM}(c,c^{\prime})$ using same technique described in the proof of \MakeUppercase theorem\nobreakspace \ref {sm_k_easy}. Now, we put $c$ at the top position of the manipulator's vote. For each candidate $c^{\prime} \in \mathcal{C}\setminus\{c\}$, $c^{\prime}$ can be placed at positions $i\in \{2,\ldots,m\}$ in the manipulator's vote which makes $s^{max}_{NM}(c,c^{\prime}) + \alpha_i - \alpha_1$ negative. Using this, construct 
a bipartite graph with $\mathcal{C}\setminus\{c\}$ on left and $\{2, \dots, m\}$ 
on right and there is an edge between $c^{\prime}$ and $i$ iff the candidate $c^{\prime}$ 
can be placed at $i$ in the manipulator's vote according to the above criteria. 
Now solve the problem by finding existence of perfect matching in this graph.
\end{proof}

Next, we move on to the SM problem for the Bucklin rule.

\begin{theorem}\label{sm_bucklin_easy}
The SM problem is in \Pb{} for the Bucklin voting rule for any number of manipulators. 
\end{theorem}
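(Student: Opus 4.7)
The plan is to iterate over candidate values of $c$'s target Bucklin score $\ell^* \in \{1, \ldots, m\}$ and, for each $\ell^*$, use a network-flow feasibility check to decide whether the manipulators can cast their votes so that in every extension of the partial non-manipulator votes, $c$'s Bucklin score is at most $\ell^*$ while every other candidate's Bucklin score strictly exceeds $\ell^*$. The algorithm returns YES if some $\ell^*$ succeeds and NO otherwise.

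First I would place $c$ at the top position of every manipulator vote, which is without loss of generality by the same swap argument used in the proof of \MakeUppercase theorem\nobreakspace \ref {sm_k_easy}: moving $c$ upward can only weakly increase $N_\ell(c)$ and only weakly decrease $N_\ell(c')$ at every level $\ell$ for every $c' \neq c$. Fix $T = \lceil n/2 \rceil$. For each partial non-manipulator vote $v$ it is straightforward to compute in polynomial time (a) whether $c$ is forced into the top $\ell^*$ of every linear extension of $v$, and (b) for each $c' \neq c$ and each $\ell' \leq \ell^*$, whether $c'$ can appear in the top $\ell'$ of some extension of $v$; summing over partial votes yields $\alpha_{\ell^*}(c)$, the guaranteed non-manipulator contribution to $N_{\ell^*}(c)$, and $\beta_{\ell'}(c')$, the maximum possible non-manipulator contribution to $N_{\ell'}(c')$. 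Because the adversary chooses its extension of each partial vote independently, the requirement at a single $\ell^*$ separates into two conditions: (i) $\alpha_{\ell^*}(c) + |M| \geq T$, and (ii) for every $c' \neq c$ and every $\ell' \leq \ell^*$, the number $m_{\ell'}(c')$ of manipulators ranking $c'$ in positions $\{2, \ldots, \ell'\}$ satisfies $m_{\ell'}(c') \leq T - 1 - \beta_{\ell'}(c')$.

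Condition (i) is a direct arithmetic check. For condition (ii) I would build an integer flow network whose variables $y_{c',p}$ record the number of manipulators placing $c'$ at position $p \in \{2, \ldots, m\}$, subject to the row sums $\sum_p y_{c',p} = |M|$ for each $c' \neq c$, the column sums $\sum_{c'} y_{c',p} = |M|$ for each $p$, and the prefix-sum upper bounds $\sum_{p=2}^{\ell'} y_{c',p} \leq T - 1 - \beta_{\ell'}(c')$. The prefix-sum constraints admit a standard encoding by attaching, for each candidate $c'$, a chain of auxiliary nodes through which the flow from the source must pass before reaching the position side, with edge capacities that enforce the cumulative bounds. A feasible integer flow can be decomposed into $|M|$ actual manipulator linear orders by a Birkhoff--von Neumann style argument.

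The hard part will be the correctness of iterating over a single $\ell^*$: one must argue that whenever the manipulators have any winning strategy, at least one value of $\ell^*$ passes the feasibility check. The natural candidate is $\ell^* = \max_E \mathrm{Bucklin}_E(c)$ taken over all extensions $E$ the adversary can choose against the given winning strategy, which immediately satisfies (i). For (ii), the key step is to show that any violation $\beta_{\ell'}(c') + m_{\ell'}(c') \geq T$ would give the adversary an extension in which $c'$ reaches majority by level $\ell'$, and this by the winning property would force $c$'s Bucklin score strictly below $\ell'$; a careful per-partial-vote comparison of the two adversarial extensions---the one maximizing $N_{\ell'}(c')$ and the one realizing $\ell^*$---should yield the required prefix-sum bound via the monotonicity of the underlying partial orders.
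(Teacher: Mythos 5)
There is a genuine gap: your decomposition of the strong-manipulation condition into the two independent checks (i) and (ii) is sound but not complete, because it discards the coupling, inside a single partial vote, between the position of $c$ and the position of $c'$. Your condition (ii) demands that no $c'$ can \emph{ever} reach majority at any level $\ell' \le \ell^*$, where $\beta_{\ell'}(c')$ is the unconstrained maximum of $c'$'s appearances in the top $\ell'$. But the adversary does not merely need $c'$ to reach majority at level $\ell'$; it needs this \emph{while} $c$ simultaneously fails to reach majority at level $\ell'-1$, in the \emph{same} extension. In a partial vote whose partial order forces $c$ into the top $\ell'-1$ whenever $c'$ is pushed into the top $\ell'$, these two goals conflict, and an extension achieving $\beta_{\ell'}(c')$ may hand $c$ a majority at level $\ell'-1$, so that $c$ still wins (with strictly smaller Bucklin score). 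Concretely: if in every extension where $w$ attains majority by level $3$ the candidate $c$ is thereby forced to attain majority by level $2$, while in other extensions $c$'s score is $3$, then $c$ is a necessary winner, yet no uniform threshold $\ell^*$ passes your test ($\ell^*=3$ fails (ii) at $\ell'=3$, and $\ell^*=2$ fails (i)). So the claim you defer to the last paragraph --- that any violation $\beta_{\ell'}(c') + m_{\ell'}(c') \ge T$ yields a defeating extension --- is false, and no ``per-partial-vote comparison'' can rescue it, because the statement being proved is not a necessary condition for strong manipulability.

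The paper's proof is organized around exactly this difficulty. For each pair $(w,\ell)$ it builds a worst-case profile $\overline{\mathcal{P}}_{w,\ell}$ that greedily demotes $c$ and promotes $w$, but it explicitly isolates the set $\mathcal{P}^*_{w,\ell}$ of votes in which promoting $w$ into the top $\ell$ forces $c$ into the top $\ell-1$, and it adds a third inequality to the system $(\star)$ (the condition $n/2 - d_\ell(w) - \eta_{w,\ell} < n/2 - d_\ell(c)$) which charges each such ``conflicted'' vote to \emph{both} candidates. Your constraint system has no analogue of this term, which is why completeness fails. Your uniform target score $\ell^*$ is a second, related source of loss: the correct condition is per-extension ($\ell_c(E) < \ell_w(E)$ for all $w$, with $\ell_c(E)$ allowed to vary with $E$), not a single threshold valid across all extensions. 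The flow/matching machinery for realizing the manipulators' votes from prefix-sum bounds is fine in itself, but it is solving the wrong feasibility problem.
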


\begin{proof}
Let $(\mathcal{C},\mathcal{P},M,c)$ be an instance of SM for Bucklin, and let $m$ denote the total number of candidates in this instance. Recall that the manipulators have to cast their votes so as to ensure that the candidate $c$ wins in every possible extension of $\mathcal{P}$. We use $\mathcal{Q}$ to denote the set of manipulating votes that we will construct. To begin with, without loss of generality, the manipulators place $c$ in the top position of all their votes. Now, we have to organize the positioning of the remaining candidates across the votes of the manipulators to ensure that $c$ is a necessary winner of the election $\{\mathcal{C},\mathcal{P}\cup \mathcal{Q}\}$.

To this end, we would like to develop a system of constraints indicating the overall number of times that we are free to place a candidate $w \in \mathcal{C} \setminus \{c\}$ among the top $\ell$ positions in the profile $\mathcal{Q}$. In particular, let us fix $w \in \mathcal{C} \setminus \{c\}$ and $2 \leq \ell \leq m$. Let $\eta_{w,\ell}$ be the maximum number of votes of $\mathcal{Q}$ in which $w$ can appear in the top $\ell$ positions. Our first step is to compute necessary conditions for $\eta_{w,\ell}$.

We use $\overline{\mathcal{P}}_{w,\ell}$ to denote a set of complete votes that we will construct based on the given partial votes. Intuitively, these votes will represent the ``worst'' possible extensions from the point of view of $c$ when pitted against $w$. These votes are engineered to ensure that the manipulators can make $c$ win the elections $\overline{\mathcal{P}}_{w,\ell}$ for all $w \in \mathcal{C} \setminus \{c\}$ and $\ell \in \{2,\ldots,m\}$, if, and only if, they can strongly manipulate in favor of $c$. More formally, there exists a voting profile $\mathcal{Q}$ of the manipulators so that $c$ wins the election $\overline{\mathcal{P}}_{w,\ell} \cup \mathcal{Q}$, for all $w \in \mathcal{C} \setminus \{c\}$ and $\ell \in \{2,\ldots,m\}$, if, and only if, $c$ wins in every extension of the profile $\mathcal{P} \cup \mathcal{Q}$. 

We now describe the profile $\overline{\mathcal{P}}_{w,\ell}$. The construction is based on the following case analysis, where our goal is to ensure that, to the extent possible, we position $c$ out of the top $\ell-1$ positions, and incorporate $w$ among the top $\ell$ positions. 

\begin{itemize}
 \item Let $v \in \mathcal{P}$ be such that either $c$ and $w$ are incomparable or $w \succ c$. We add the complete vote $v^\prime$ to  $\overline{\mathcal{P}}_{w,\ell}$, where $v^\prime$ is obtained from $v$ by placing $w$ at the highest possible position and $c$ at the lowest possible position, and extending the remaining vote arbitrarily. 
 \item Let $v \in \mathcal{P}$ be such that $c \succ w$, but there are at least $\ell$ candidates that are preferred over $w$ in $v$. We add the complete vote $v^\prime$ to  $\overline{\mathcal{P}}_{w,\ell}$, where $v^\prime$ is obtained from $v$ by placing $c$ at the lowest possible position, and extending the remaining vote arbitrarily.  
 \item Let $v \in \mathcal{P}$ be such that $c$ is forced to be within top $\ell-1$ positions, then we add the complete vote $v^\prime$ to  $\overline{\mathcal{P}}_{w,\ell}$, where $v^\prime$ is obtained from $v$ by first placing $w$ at the highest possible position followed by placing $c$ at the lowest possible position, and extending the remaining vote arbitrarily.
 \item In the remaining votes, notice that whenever $w$ is in the top $\ell$ positions, $c$ is also in the top $\ell-1$ positions. Let $\mathcal{P}^*_{w,\ell}$ denote this set of votes, and let $t$ be the number of votes in $\mathcal{P}^*_{w,\ell}$.
\end{itemize}

We now consider two cases. Let $d_\ell(c)$ be the number of times $c$ is placed in the top $\ell-1$ positions in the profile $\overline{\mathcal{P}}_{w,\ell} \cup \mathcal{Q}$, and let $d_\ell(w)$ be the number of times $w$ is placed in the top $\ell$ positions in the profile $\overline{\mathcal{P}}_{w,\ell}$. Let us now formulate the requirement that in $\overline{\mathcal{P}}_{w,l} \cup \mathcal{Q}$, the candidate $c$ does \emph{not} have a majority in the top $\ell-1$ positions and $w$ \emph{does} have a majority in the top $\ell$ positions. Note that if this requirement holds for any $w$ and $\ell$, then strong manipulation is not possible. Therefore, to strongly manipulate in favor of $c$, we must ensure that for every choice of $w$ and $\ell$, we are able to negate the conditions that we derive. 

The first condition from above simply translates to $d_\ell(c) \le n/2$. The second condition amounts to requiring first, that there are at least $n/2$ votes where $w$ appears in the top $\ell$ positions, that is, $d_\ell(w) + \eta_{w,\ell} + t > n/2$. Further, note that the gap between $d_\ell(w)+\eta_{w,\ell}$ and majority will be filled by using votes from $\mathcal{P}^*_{w,\ell}$ to ``push'' $w$ forward. However, these votes  contribute equally to $w$ and $c$ being in the top $\ell$ and $\ell-1$ positions, respectively. Therefore, the difference between $d_\ell(w)+\eta_{w,\ell}$ and $n/2$ must be less than the difference between $d_\ell(c)$ and $n/2$. Summarizing, the following conditions, which we collectively denote by $(\star)$, are sufficient to defeat $c$ in some extension: $d_\ell(c) \le n/2, d_\ell(w) + \eta_{w,\ell} + t > n/2, n/2 - d_\ell(w) + \eta_{w,\ell} < n/2 - d_\ell(c)$.

From the manipulator's point of view, the above provides a set of constraints to be satisfied as they place the remaining candidates across their votes. Whenever $d_\ell(c) > n/2$, the manipulators place any of the other candidates among the top $\ell$ positions freely, because $c$ already has majority. On the other hand, if $d_\ell(c) \leq n/2$, then the manipulators must respect at least one of the following constraints: $\eta_{w,\ell} \leq n/2 - t - d_\ell(w)$ and $\eta_{w,\ell} \leq d_\ell(c) - d_\ell(w)$.

Extending the votes of the manipulator while respecting these constraints (or concluding that this is impossible to do) can be achieved by a natural greedy strategy --- construct the manipulators' votes by moving positionally from left to right. For each position, consider each manipulator and populate it's vote for that position with any available candidate. We output the profile if the process terminates by completing all the votes, otherwise, we say \textsc{No}.
We now argue the proof of correctness. Suppose, on the one hand, that the algorithm returns \textsc{No}. This implies that there exists a choice of $w \in \mathcal{C} \setminus \{c\}$ and $\ell \in \{2,\ldots,m\}$ such that for any voting profile $\mathcal{Q}$ of the manipulators, the conditions in $(\star)$ are satisfied. (Indeed, if there exists a voting profile that violated at least one of these conditions, then the greedy algorithm would have discovered it.) Therefore, no matter how the manipulators cast their vote, there exists an extension where $c$ is defeated. In particular, for the votes in $\mathcal{P} \setminus \mathcal{P}^*_{w,\ell}$, this extension is given by $\overline{\mathcal{P}}_{w,\ell}$. Further, we choose $n/2 - \eta_{w,\ell} - d_\ell(w)$ votes among the votes in $\mathcal{P}^*_{w,\ell}$ and extend them by placing $w$ in the top $\ell$ positions (and extending the rest of the profile arbitrary). We extend the remaining votes in $\mathcal{P}^*_{w,\ell}$ by positioning $w$ outside the top $\ell$ positions. Clearly, in this extension, $c$ fails to achieve majority in the top $\ell-1$ positions while $w$ does achieve majority in the top $\ell$ positions. 

On the other hand, if the algorithm returns \textsc{Yes}, then consider the voting profile of the manipulators. We claim that $c$ wins in every extension of $\mathcal{P} \cup \mathcal{Q}$. Suppose, to the contrary, that there exists an extension $\mathcal{R}$ and a candidate $w$ such that the Bucklin score of $c$ is no more than the Bucklin score of $w$ in $\mathcal{R}$. In this extension, therefore, there exists $\ell \in \{2,\ldots,m\}$ for which $w$ attains majority in the top $\ell$ positions and $c$ fails to attain majority in the top $\ell-1$ positions. However, note that this is already impossible in any extension of the profile $\overline{\mathcal{P}}_{w,l} \cup \mathcal{P}^*_{w,\ell}$, because of the design of the constraints. By construction, the number of votes in which $c$ appears in the top $\ell-1$ positions in $\mathcal{R}$ is only greater than the number of times $c$ appears in the top $\ell-1$ positions in any extension of $\overline{\mathcal{P}}_{w,l} \cup \mathcal{P}^*_{w,\ell}$ (and similarly for $w$). This leads us to the desired contradiction. 
\end{proof}

For the maximin voting rule, we have the following result.

\begin{theorem}\label{sm_maximin_easy}
The SM problem for the scoring rules and the maximin voting rules are in \Pb{}, when we have only one manipulator. The WM problem for the plurality and veto voting rules are in \Pb{}.
\end{theorem}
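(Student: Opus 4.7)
The first statement's scoring-rule part and the whole second statement are already established by earlier results: the WM claim for plurality and veto is exactly \MakeUppercase theorem\nobreakspace \ref {pv_easy}, and the SM claim for arbitrary scoring rules with one manipulator is \MakeUppercase theorem\nobreakspace \ref {sm_sr_easy}. So my plan only needs to supply a new argument for SM on maximin with a single manipulator.

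For that piece, I would place $c$ at the top of the manipulator's vote without loss of generality (this maximizes $c$'s contribution to every $D(c, y)$) and then decide the order of the remaining $m-1$ candidates by a feasibility check. Strong manipulation succeeds iff for every pair $(x, y) \in (\mathcal{C}\setminus\{c\})^2$ the adversary fails to produce an extension $E$ of the non-manipulators' partial votes with $D_E(x, z) \geq D_E(c, y)$ for every $z \neq x$ simultaneously. For each fixed $(x, y)$, I would try to build the adversary's best extension partial-vote by partial-vote: inside each partial vote $P$, push $x$ as high as the partial order allows (simultaneously maximizing $D_P(x, \cdot)$ against every opponent) and, wherever the partial order permits, seat $y$ above $c$ (minimizing $D_P(c, y)$). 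This summarizes the adversary's attack using the easily computed quantities $\overline{d}(x, z) := \max_{E} D_{E}^{NM}(x, z)$ and $\underline{d}(c, y) := \min_{E} D_{E}^{NM}(c, y)$, and each attack in turn becomes a threshold requirement forcing the manipulator to put some specific $z$ either above or below $x$ in her vote.

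Aggregating all such requirements yields a combinatorial system of ``at least one witness must lie above (or below) $x$'' constraints on the manipulator's ordering, which I would verify by a bottom-up greedy placement: at each step choose any remaining candidate whose witness is still among the yet-to-be-placed candidates and put it at the next lowest vacant position, with a standard reversal argument showing the greedy is complete. The main obstacle is cleanly justifying the decoupling of the adversary's two per-partial-vote objectives: in partial votes where, for instance, $c$ is forced above $x$ while $y$ is incomparable to both, maximizing $D_P(x, z)$ and minimizing $D_P(c, y)$ can interfere. To handle this I would also enumerate, for each $(x, y)$, the specific candidate $z^{\star}$ hypothesized to realize $\min_z D_E(x, z)$ in the adversarial extension, adding only a polynomial factor to the search; for each such triple the per-partial-vote joint optimization is well-defined and the resulting feasibility problem for the manipulator's vote can be solved in polynomial time, placing the problem in \Pb{}.
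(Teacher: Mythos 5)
Your reductions of the scoring-rule part to Theorem~\ref{sm_sr_easy} and of the WM part to Theorem~\ref{pv_easy} are fine and consistent with how the paper organizes these claims; the only new content needed is SM for maximin with one manipulator, and that is where your argument has a genuine gap. The adversary's success against a fixed $x$ is the condition $\exists E\,\exists y\,\forall z:\ D_E(x,z)\ge D_E(c,y)$, i.e.\ a \emph{conjunction over all $z$ that must hold in one and the same extension $E$}. Your decoupled quantities $\overline{d}(x,z)=\max_E D_E(x,z)$ and $\underline{d}(c,y)=\min_E D_E(c,y)$ are each attained by possibly different extensions, so using them overstates the adversary's power and would make the algorithm answer \NO{} on \YES{} instances. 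You correctly flag this, but the proposed repair --- enumerating the candidate $z^{\star}$ hypothesized to realize $\min_z D_E(x,z)$ --- does not close the gap: naming the minimizer does not remove the universal quantifier over $z$. Concretely, your per-triple test amounts to checking $\max_E\bigl[D_E(x,z^{\star})-D_E(c,y)\bigr]\ge 0$, and $\max_{z^{\star}}\max_E[\cdot]\ \ge\ \max_E\min_z[\cdot]$, so the test can pass even when no single extension makes $x$'s maximin score reach $c$'s (the extension optimal for the pair $(x,z^{\star})$ may drive some other $D_E(x,z)$ below $D_E(c,y)$). Your own example of interference --- $c\succ x$ forced with $y$ incomparable to both, where seating $y$ above $c$ transitively forces $y\succ x$ --- shows exactly why the per-vote choices for different opponents of $x$ cannot be optimized independently.

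This simultaneity problem is precisely what makes the \textsc{Necessary Winner} problem for maximin nontrivial, and the paper's proof does not re-derive it: it invokes the polynomial-time NW algorithm for maximin of Xia and Conitzer as a black box to compute, for each relevant pair, the jointly worst-case pairwise tallies $N_{(w,w^{\prime})}(\cdot,\cdot)$, and then builds the manipulator's vote by a top-down greedy in the style of Bartholdi--Orlin (your bottom-up greedy over ``some witness must sit above $x$'' constraints is a cosmetic variation and is not the issue). To fix your proof you would either need to cite that NW machinery for the adversary's side, or supply a genuinely new argument that a single extension can simultaneously realize the worst case against all $z$ --- which is exactly the step currently missing.
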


\begin{proof}
 For the time being, just concentrate on non-manipulators' votes. Using the algorithm for NW for maximin in \cite{xia2008determining}, we compute for all pairs $w, w^{\prime} \in \mathcal{C}$, $N_{(w,w^{\prime})}(w,d)$ and $N_{(w,w^{\prime})}(c,w^{\prime})$ for all $d\in \mathcal{C}\setminus \{c\}$. This can be computed in polynomial time. Now we place $c$ at the top position in the manipulator's vote and increase all $N_{(w,w^{\prime})}(c,w^{\prime})$ by one. Now we place a candidate $w$ at the second position if for all $w^{\prime} \in \mathcal{C}$, $N_{(w,w^{\prime})}^{\prime}(w,d) < N_{(w,w^{\prime})}(c,w^{\prime})$ for all $d\in \mathcal{C}\setminus \{c\}$, where $N_{(w,w^{\prime})}^{\prime}(w,d) = N_{(w,w^{\prime})}(w,d)$ of the candidate $d$ has already been assigned some position in the manipulator's vote, and $N_{(w,w^{\prime})}^{\prime}(w,d) = N_{(w,w^{\prime})}(w,d)+1$ else. The correctness argument is in the similar lines of the classical greedy manipulation algorithm of Bartholdi and Orlin.
\end{proof}

\section{Conclusion}\label{sec:con}
We studied manipulation under an incomplete information setting thereby filling an important research gap. The incomplete information setting is more realistic than the classical complete information setting and hence our results in this setting are significant from both theoretical and application point of view. We leave open the problem of completely establishing the complexity of strong and weak manipulation for all the scoring rules. Other fundamental forms of manipulation and control do exist in voting, such as destructive manipulation and control by adding candidates. An interesting line of future work is to explore the complexity of these problems in a partial information setting.
Although it has been shown that the complexity barrier against manipulation is weak~\cite{procaccia2006junta,faliszewski2010ai,walsh2010empirical}, we believe that studying the computational complexity of manipulation of voting rules is still an interesting and active topic of research in computational social choice~\cite{hemaspaandra2013schulze,brandt2012computational,faliszewski2014complexity,narodytska2013manipulating,obraztsova2012optimal}.
Studying the average case complexity of manipulation in the incomplete information setting, as already done in the complete information setting~\cite{procaccia2006junta,faliszewski2010ai,walsh2010empirical}, is another very interesting research direction to pursue in future.

\bibliographystyle{apalike}
\bibliography{mp}

\end{document}